\documentclass{article}




\usepackage[final]{neurips_2025}


\usepackage[utf8]{inputenc} 
\usepackage[T1]{fontenc}    
\usepackage{hyperref}
\hypersetup{
    colorlinks=true,
    linkcolor=blue,
    filecolor=magenta,      
    urlcolor=cyan,
    citecolor=blue
    }
\usepackage{url}            
\usepackage{booktabs}       
\usepackage{amsfonts}       
\usepackage{nicefrac}       
\usepackage{microtype}      
\usepackage{xcolor}         

\usepackage{graphicx}
\usepackage{bm}
\usepackage{amsmath, amsthm, mathtools}

\usepackage{enumitem}
\usepackage{caption}
\usepackage{array}
\usepackage{adjustbox}
\usepackage{booktabs}
\usepackage{multirow}
\usepackage{subcaption}
\usepackage{wrapfig}
\usepackage{mdframed}
\usepackage{pifont}
\newcommand{\xmark}{\ding{53}}%

\newlength{\shrinklength}

\setlength{\shrinklength}{-1.2em}

\newtheorem{remark}{Remark}
\newtheorem{lemma}{Lemma}

\newtheoremstyle{named}{}{}{\itshape}{}{\bfseries}{.}{.5em}{\thmnote{#3's }#1}
\theoremstyle{named}

\title{Exploiting the Asymmetric Uncertainty Structure of Pre-trained VLMs on the Unit Hypersphere}

%

\author{%
  Li Ju\thanks{Corresponding author (\texttt{li.ju@it.uu.se})} \thanks{Department of Information Technology, Uppsala University, Uppsala, Sweden}
  \quad
  Max Andersson\footnotemark[2] \quad Stina Fredriksson\footnotemark[2] \quad Edward Glöckner\footnotemark[2]
  \AND
  Andreas Hellander\footnotemark[2] \quad
  Ekta Vats \footnotemark[2] \quad
  Prashant Singh\footnotemark[2] \thanks{Science for Life Laboratory, Uppsala University, Uppsala, Sweden} \\
}

\begin{document}

\maketitle

\begin{abstract}
Vision-language models (VLMs) as foundation models have significantly enhanced performance across a wide range of visual and textual tasks, without requiring large-scale training from scratch for downstream tasks. However, these deterministic VLMs fail to capture the inherent ambiguity and uncertainty in natural language and visual data. Recent probabilistic post-hoc adaptation methods address this by mapping deterministic embeddings onto probability distributions; however, existing approaches do not account for the asymmetric uncertainty structure of the modalities, and the constraint that meaningful deterministic embeddings reside on a unit hypersphere, potentially leading to suboptimal performance. In this paper, we address the asymmetric uncertainty structure inherent in textual and visual data, and propose AsymVLM to build probabilistic embeddings from pre-trained VLMs on the unit hypersphere, enabling uncertainty quantification. We validate the effectiveness of the probabilistic embeddings on established benchmarks, and present comprehensive ablation studies demonstrating the inherent nature of asymmetry in the uncertainty structure of textual and visual data.
\end{abstract}

\section{Introduction}
\label{sec:intro}
Vision-language models (VLMs) have demonstrated impressive capabilities in understanding visual and linguistic information, by constructing a joint embedding space that aligns image and text embeddings \citep{vlm_intro, vlm_survey}. Models like CLIP \citep{clip} and BLIP \citep{blip}, enable a wide range of downstream applications, such as zero-shot classification \citep{clip_classification}, image-to-text retrieval \citep{cao2022image}, optical character recognition \citep{ocr}. However, due to the inherent ambiguity within text and image data, aligned point estimates for text and image embeddings produced by deterministic VLMs may not fully capture the complex relationships between the visual and linguistic embedding spaces. 

Alternatively, instead of learning deterministic point estimate embeddings, probabilistic embedding methods \citep{chun2021probabilistic} map text and language data onto probability distributions, more effectively capturing the ambiguity and uncertainty of the data. However, they entail training probabilistic VLMs from scratch on massive datasets -- an expensive process that forgoes the strengths of established deterministic VLMs. To explore post-hoc methods for constructing probabilistic embeddings from pretrained deterministic VLMs, recent frameworks such as ProbVLM \citep{probVLM_frozen_embeddings} and BayesVLM \citep{baumann2024post} have been proposed. However, the abmiguous nature of text-to-image retrieval is due to both textual abstraction and image variability, while image-to-text ambiguity arises from the multiplicity of textual descriptions. Existing methods overlook this asymmetric nature in texts and images, and the fact that the embeddings of most pre-trained VLMs reside on a unit hypersphere, instead of the Euclidean space. 

In this work, we propose a post-hoc method, Asymmetric Probabilistic Vision-Language Model (AsymVLM), to model the uncertainty of embeddings obtained from pretrained VLMs
\begin{itemize}[leftmargin=1.5em, itemsep=0.1em, topsep=0em]
    \item We formally address the asymmetric uncertainty structure inherent in vision–language data, high aleatoric uncertainty in text versus lower aleatoric uncertainty in images.
    \item We propose AsymVLM, an adapter that exploits the asymmetric uncertainty structure and performs post-hoc probabilistic adaptation on the unit hypersphere, rather than in Euclidean space. We also show that AsymVLM is a natural extension of CLIP’s cosine-similarity loss with the added ability to quantify uncertainty.
    \item Empirically, AsymVLM yields more accurate uncertainty estimates and higher cross-modal retrieval accuracy on multiple benchmarks. We further demonstrate its advantages in robust fine-tuning, zero-shot classification and "none-of-the-above" rejection. 
\end{itemize}
We formulate the problem, addressing the asymmetric structure of vision-language data in Section \ref{sec:asym}, and derive our method, AsymVLM, in Section \ref{sec:methods}. We then present empirical evaluations, including uncertainty quantification, ablation studies, and downstream applications in Section \ref{sec:results}. We conclude and discuss future work in Section \ref{sec:conclusion}.

\begin{figure}[t]
\vspace{1.1\shrinklength}
    \centering
    \includegraphics[width=\linewidth]{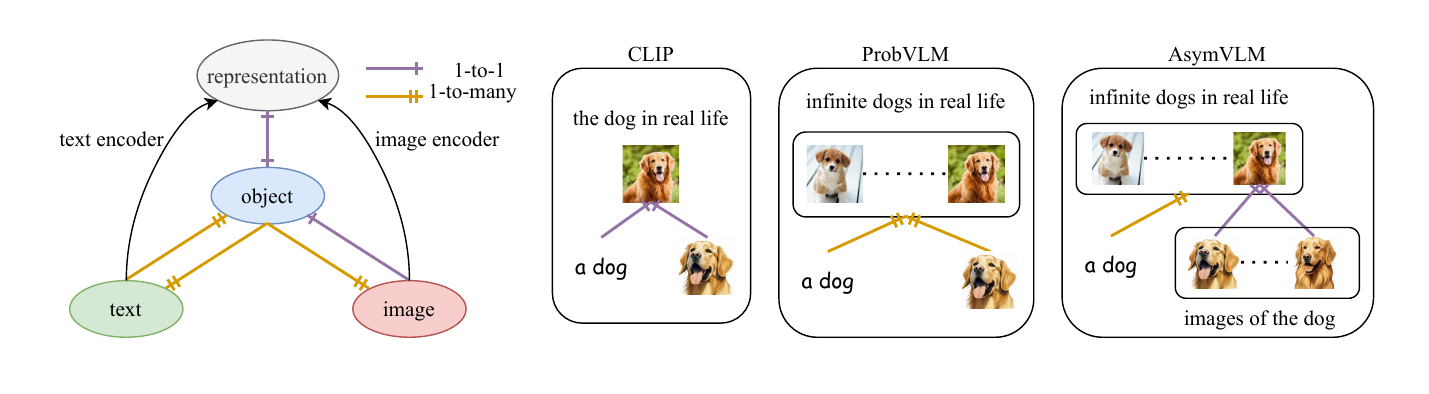}
    \vspace{1.8\shrinklength}
    \caption{\textbf{Left}: Mapping relations between text, image, object, and representation spaces. \textbf{Right}: Comparison of methods -- CLIP uses deterministic one-to-one mappings; ProbVLM introduces one-to-many mappings via probabilistic embeddings; AsymVLM further captures asymmetry structure between texts, images and objects.}
    \label{fig:diagram}
    \vspace{\shrinklength}
\end{figure}
\section{Related work}\label{sec:related}
\paragraph{Vision-language models}
Recent advancements in vision-language modeling have facilitated the development of multimodal models capable of learning a shared embedding space for images and text. These models typically achieve this by optimizing directional similarity metrics, such as cosine similarity. Notable works including CLIP \cite{clip}, BLIP \cite{blip}, SigLIP \cite{zhai2023sigmoid}, typically align paired image-text representations by maximizing the cosine similarity between corresponding embeddings. This strategy has proven to be effective for large-scale training, enabling these models to capture semantic relationships, and generalize well across diverse downstream tasks. However, these pretrained VLMs offer deterministic mappings that fail to capture the inherent ambiguity in the inputs.

\paragraph{Probabilistic VLMs}
Methods have been proposed to improve deterministic VLM embeddings by learning a unified probabilistic embedding space for both visual and textual data. Methods such as PCME \cite{chun2021probabilistic}, and PFE \cite{shi2019probabilistic} learn to map texts and images to aligned probability distributions, by maximizing the cross-modal likelihood between corresponding text and image data. However, these approaches require training from scratch, which is computationally expensive as existing high-quality deterministic VLMs are not fully leveraged. A more resource-efficient alternative is the probabilistic adaptation of pretrained VLMs. ProbVLM \cite{probVLM_frozen_embeddings} trains an adapter to construct probability distributions from deterministic embeddings, while BayesVLM \cite{baumann2024post} estimates the uncertainty of cosine similarity through post-hoc Bayesian posterior approximation. As mentioned in Section \ref{sec:intro} (and detailed in the following Section), although having demonstrated their efficacy for uncertainty quantification, existing post-hoc approaches do not consider the asymmetry in uncertainty structures between textual and visual data, and the constraint that meaningful embeddings of most pretrained VLMs reside on the unit sphere, resulting in sub-optimal performance.

\paragraph{Embeddings on the unit hypersphere}
Cosine similarity is widely utilized in contemporary VLMs and foundation models pre-trained with contrastive loss \cite{chen2020simple, clip, chen2021exploring}. This approach inherently constrains the learned embeddings to reside on the unit hypersphere. These embeddings have been shown to have advantages such as improved uniformity and alignment compared to their Euclidean counterparts \cite{pmlr-v119-wang20k}. Beyond cosine similarity, embeddings on the unit hypersphere can also be explicitly modeled by assuming directional distributions, such as the von Mises-Fisher (vMF) distribution \cite{dhillon2003modeling}. \citet{scott2021mises} introduced a contrastive loss based on the vMF distribution as a pretraining method, while \cite{sandhuclip} proposes a method to distill pretrained VLMs, adapting student models with vMF assumptions for zero-shot classification. However, while these approaches focus on improving empirical performance in downstream tasks, none explicitly model the uncertainty of embeddings from pretrained VLMs.

\section{Asymmetric structure of texts and images}\label{sec:asym}
The relationship between vision and language exhibits an inherent asymmetry in its uncertainty structure. A textual concept, such as "dog," acts as an abstract type that can map to a vast number of unique real-world instances (tokens). Each of these instances can, in turn, be captured in countless images differing in viewpoint, lighting, and composition. This cascaded one-to-many mapping—from abstract type to physical token, and from token to specific image—induces high aleatoric uncertainty in the text modality.

Conversely, an image portrays a single, specific token. While it depicts one object unambiguously, it can be described by multiple valid captions (e.g., "a dog," "a golden retriever," "a pet"). This one-to-many mapping, from image to text, arises from linguistic diversity. Therefore, the uncertainty in text-to-image retrieval (rooted in abstraction and visual variability) is fundamentally different from that in image-to-text retrieval (rooted in descriptive variance). Despite this structural asymmetry as shown in Figure \ref{fig:diagram}, existing post-hoc probabilistic adapters typically treat both retrieval directions symmetrically.



\begin{figure}[t]
    \centering
    \includegraphics[width=\linewidth]{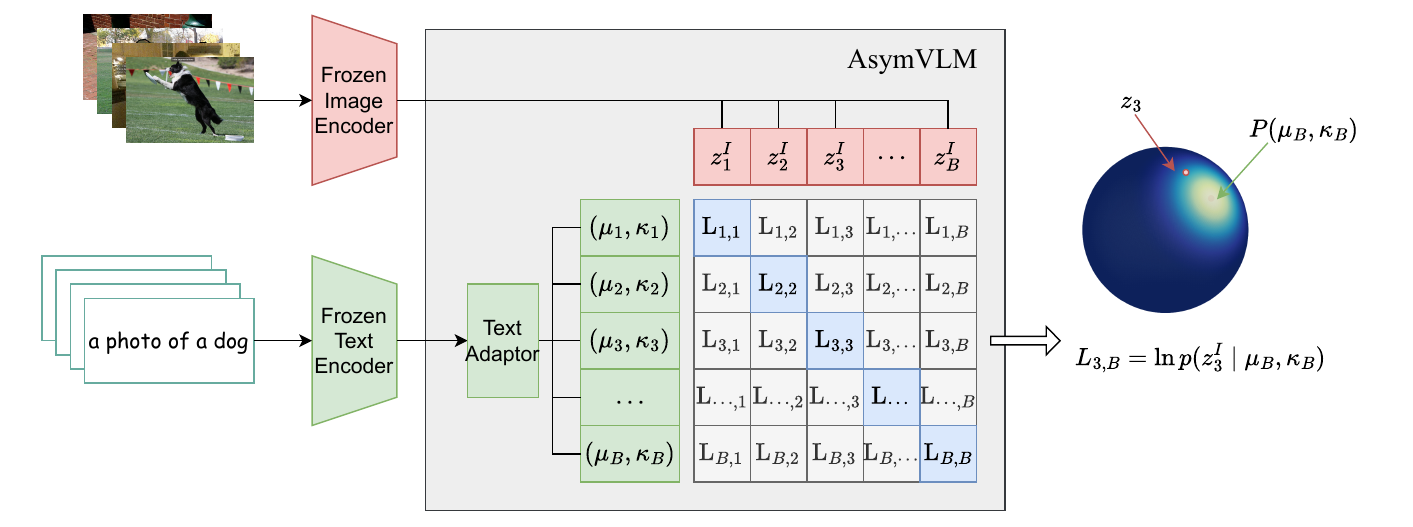}
    \vspace{\shrinklength}
    \caption{Asymmetric probabilistic adaptation (AsymVLM): Texts are encoded with a frozen text encoder and adaptor to produce probabilistic hyperspherical embeddings (e.g., vMF distribution), while image are deterministically encoded via a frozen image encoder. The log likelihood matrix $L$, with element $L_{m,n}$ representing the log likelihood of image vector $z_n^I$ given text embedding $\bm{z}_m^T \sim P(\mu_m, \kappa_m)$, is optimized using InfoNCE to maximize diagonals and minimize off-diagonals.}
    \label{fig:asym_structure}
    \vspace{\shrinklength}
\end{figure}

\subsection{Mathematical formulation}
Let $\mathcal{T}$, $\mathcal{I}$, $\mathcal{O}$ and $\widetilde{\mathcal{O}}$ denote the text, image, (real-world) object and (numerical) representation space respectively. The relationships between $t\in\mathcal{T}, i\in\mathcal{I}, o\in\mathcal{O}$ and $\tilde{o}\in\widetilde{\mathcal{O}}$ can be presented as follows.
\begin{table}[h]
\footnotesize
\centering
\renewcommand{\arraystretch}{1.3}
\begin{tabular}{@{}ll@{} c}
\toprule
\textbf{\textsc{Mapping}} & \textbf{\textsc{Definition / Derivation}} & 
\textbf{\textsc{One-to-many?}}\\
\midrule
Text $\to$ Object 
& $\Phi_T(t) \coloneqq \{ o \in \mathcal{O} : o \text{ can be described by } t \}$ & \checkmark \\
Object $\to$ Image 
& $\Phi_I(o) \coloneqq \{ i \in \mathcal{I} : i \text{ is an image of } o \}$ 
& \checkmark \\
Text $\to$ Image 
& $(\Phi_I \circ \Phi_T)(t) = \bigcup_{o \in \Phi_T(t)} \Phi_I(o)$ &
\checkmark\\
\midrule
Image $\to$ Object 
& $\Phi_I^{-1}(i) \coloneqq o$ \quad & \xmark (assumption)\\
Object $\to$ Text 
& $\Phi_T^{-1}(o) \coloneqq \{ t \in \mathcal{T} : o \text{ can be described by } t \}$ & \checkmark\\
Image $\to$ Text 
& $(\Phi_T^{-1} \circ \Phi_I^{-1})(i) = \{ t \in \mathcal{T} : \Phi_I^{-1}(i) \in \Phi_T(t) \}$ & \checkmark\\
\midrule
Object $\leftrightarrow$ Representation & $\psi(o) \coloneqq \tilde{o}, \ \psi^{-1}(\tilde{o}) = o$  & \xmark\\
Text encoder & $f_T \approx \psi \circ \Phi_T$ & \checkmark\\
Image encoder & $f_I \approx \psi \circ \Phi_I^{-1}$ & \xmark\\
\bottomrule
\end{tabular}
\vspace{\shrinklength}
\end{table}

\paragraph{Building VLMs}
Existing datasets such as MS-COCO \cite{coco_dataset} and Flickr30K \cite{flickr_dataset} collect data in the form of $\{(t_n, i_n)\}_{n=1}^N$ such that there exist $o_n\in\mathcal{O}$ which satisfies $o_n \in \Phi_T(t_n)$ and $i_n\in\Phi_I(o_n)$. Based on such datasets, we aim to find a $d$-dimensional real-valued representation space $\widetilde{\mathcal{O}}\subset\mathbb{R}^d$, such that there exists a bijective function $\psi:\mathcal{O}\to\widetilde{\mathcal{O}}$. More importantly, the goal is to approximate the composite function $\psi \circ \Phi_T: \mathcal{T}\to \mathcal{P}(\widetilde{\mathcal{O}})$ and $\psi \circ \Phi_I^{-1}: \mathcal{I}\to \widetilde{\mathcal{O}}$ with finite samples from datasets as the text encoder $f_T$ and image encoder $f_T$, following the Platonic representation hypothesis \cite{huh2024platonic}.

\subsection{Motivating the approach}

Trained on datasets in the form of $\{(t_n, i_n)\}_{n=1}^N$, existing deterministic pre-trained models maximize the similarity measure between matching text-image pairs while minimizing it for non-matching pairs \cite{clip, zhai2023sigmoid}. These approaches effectively learn a real-valued embedding space $\widetilde{\mathcal{O}}$, along with deterministic text and image encoders. Existing probabilistic adaptation methods, rather than mapping an image or caption to a single point in the embedding space, represent elements of  $\mathcal{T}$ or $\mathcal{I}$ as random variables in $\widetilde{\mathcal{O}}$, thereby capturing the inherent one-to-many nature of the relationships.

However, the relationships among texts, images, and objects are inherently asymmetric, leading to distinct structures in text-to-image and image-to-text retrieval tasks. While both tasks are one-to-many, the asymmetry arises from different sources: abstraction and visual variability in text-to-image versus diverse textual descriptions in image-to-text. This structural difference motivates the use of asymmetric probabilistic structure for encoders within the representation space.

\emph{From an uncertainty-quantification perspective}, we adopt the standard taxonomy of aleatoric vs. epistemic uncertainty \cite{hullermeier2021aleatoric}. Aleatoric uncertainty captures irreducible data ambiguity, e.g. the caption "a dog" denotes an entire class of breeds, poses and contexts, so no amount of additional training can pinpoint a single underlying object. Epistemic uncertainty captures model ignorance that could be reduced with more or better-distributed data, e.g., images, while depicting a unique scene with low aleatoric spread, may still suffer from coverage gaps if certain viewpoints or lighting conditions were under-represented during training. This asymmetry yields inherently broad aleatoric uncertainty in text versus more data‐dependent epistemic uncertainty in images. In practice, aleatoric uncertainty is modeled via parameterized distributions, whereas epistemic uncertainty is estimated by Monte Carlo methods (e.g., sampling from dropout layers or Bayesian neural networks). 

Assuming an underlying asymmetric uncertainty structure, a direction for improvement emerges. For text embeddings, it is essential to capture the wide variance inherent in language via distributions to reflect the multitude of corresponding objects. Conversely, image embeddings can be be modeled deterministically and the uncertainty is reduced through large coverage of visual data in the datasets.

\section{Method}\label{sec:methods}
With a pretrained VLM such as CLIP or SigLIP, we denote the text encoder as $f_T: \mathcal{T}\to\mathbb{S}^{d-1}$ and image encoder as $f_I: \mathcal{I}\to \mathbb{S}^{d-1}$, where $d$ is the dimension of the representation space. 

\subsection{Asymmetric probabilistic adaptation}
As discussed in Section \ref{sec:asym}, capturing the variance is crucial for text embeddings, while ensuring coverage in training data is key for image embeddings. To achieve this, we introduce a text adapter $g_T$ for the text encoder $f_T$, while retaining the deterministic embeddings from the image encoder $f_I$, as expanding image data coverage lies beyond the scope of this work.
Formally, the embedding of any text $t\in\mathcal{T}$ is modeled by a random variable $\bm{z}^T$,
\begin{align}
    \bm{z}^T \sim P(\theta(t)) \ \text{where}\ \theta(t) \coloneqq g_T \circ f_T(t)\label{eq:z_t},
\end{align}
where $P(\theta(t))$ denotes a distribution parametrized by $\theta(t)$ predicted by a neural network given corresponding text. The deterministic image embedding $z^I$ from the image encoder $f_I$ is retained.

\paragraph{Objective function} For any batch of data $\{(t_1, i_1), \dots, (t_B, i_B)\}$, we aim to maximize $\ln p_{\bm{z^T_n}}(z^I_n)$ for all $n\in[B]$ while minimizing $\ln p_{\bm{z}^T_m}(z^I_n)$ for all $n, m\in[B]$, $n\neq m$, where $p_{\bm{a}}(b)$ denotes the value of the PDF of random variable $\bm{a}$ at $b$. First we define the log likelihood matrix $L\in\mathbb{R}^{B\times B}$, where the $(m, n)$-th element is computed as $L(m, n) = \ln p_{\bm{z}^T_m}(z^I_n)$, for $n, m\in [B]$. 

To maximize the diagonal elements and minimize the off-diagonal elements of $L$, we apply the InfoNCE loss to $L$ and obtain the objective function to be minimized,
\begin{align}\label{eq:object}
    \underset{\theta\in\Theta}{\arg\min} -\frac{1}{2B} \sum_{n=1}^B
    \left [\ln \frac{ \exp{( \tau \cdot L(n, n))} }
    {\sum_{m=1}^B \exp{( \tau \cdot L(n, m))} }
    +
    \ln \frac{\exp{( \tau \cdot L(n, n))} }
    {\sum_{m=1}^B \exp{( \tau \cdot L(m, n))} }
    \right],
\end{align}
where $\tau$ is a temperature parameter and $\theta$ is the parameter set of distribution $P_{\bm{z}^T}(\theta(t))$.

\subsection{Modelling on the unit hypersphere}
The shared representation space built by most pre-trained VLMs is $\mathbb{S}^{d-1}$, the domain in which both $\bm{z}^T$ and $z^I$ reside. This observation necessitates the selection of a directional distribution for $P(\theta(t))$, defined on $\mathbb{S}^{d-1}$. To properly construct a probabilistic embedding on $\mathbb{S}^{d-1}$, directional distributions for the embeddings should be employed -- either projected distributions (e.g., the angular normal distribution) or native directional distributions (e.g., the vMF distribution). In this work, we use the vMF, and power spherical (PS) distributions to model the text embeddings distribution. The former is generally considered as the simplest directional distribution, while the latter is reported to be an alternative to the vMF distribution with a closed form normalization constant.



\paragraph{Derivation w.r.t. vMF distribution}
The vMF distribution $\operatorname{vMF}(\mu, \kappa)$ is a probability distribution on $\mathbb{S}^{d-1}$, parameterized by a mean direction $\mu\in\mathbb{R}^{d}$ and an isotropic concentration parameter $\kappa\in\mathbb{R}$, where $\lVert \mu\ \rVert_2 = 1$ and $\kappa \geq 0$. The PDF for random unit vector $x$ is given by,
\begin{align*}
    p(x; \mu, \kappa) = C_d(\kappa) \exp (\kappa \cdot \mu^\top x),
    \ \text{where} \ 
    C_d(\kappa) = \frac{\kappa^{d/2 - 1}}{(2\pi)^{d/2} I_{d/2 - 1}(\kappa)},
\end{align*}
and $I_p$ denotes the modified Bessel function of the first kind, at order $p$.

Following Equation \ref{eq:z_t}, assuming that $\bm{z}^T \sim \operatorname{vMF}(\mu(t), \kappa(t))$ for text $t\in\mathcal{T}$, for any image $i\in\mathcal{I}$, we have $\ln p_{\bm{z}^T}(z^I) = \kappa \cdot \mu(t)^\top z^I + \ln C_d(\kappa(t))$. However, computing $C_d(\kappa)$ is intractable in the high dimensional space, yet \cite{scott2021mises} have shown that $\ln C_d(\kappa)$ can be approximated by $F_d(\kappa) + \mathrm{const.}$,
where $F_d(\kappa)$ is given by,
{
\small
\begin{align*}
    F_d(\kappa) &=
    \frac{d-1}{4}\ln \left(\frac{d-1}{2} + \sqrt{\left( \frac{d-1}{2}\right)^2 + \kappa^2} \right) -\frac{1}{2}\sqrt{\left( \frac{d-1}{2}\right)^2 + \kappa^2}\\
    &+ \frac{d-1}{4}\ln \left(\frac{d-1}{2} + \sqrt{\left( \frac{d+1}{2}\right)^2 + \kappa^2}\right)
    - \frac{1}{2}\sqrt{\left( \frac{d+1}{2}\right)^2 + \kappa^2}.
\end{align*}
}

Injecting the log likelihood of vMF into Equation \ref{eq:object} results in the vMF kernel for the objective function $L_{\text{vMF}}$ as follows,
\begin{align*}
    L_{\text{vMF}}(r, s) = \kappa(t_r)\cdot \mu(t_r)^\top z^I_s + F_d(\kappa(t_r)), \ \forall r, s\in[B].
\end{align*}

The detailed derivation for the approximation is deferred to Appendix \ref{app:approx}.

\paragraph{Derivation w.r.t. PS distribution}
The power spheric distribution $\operatorname{PS}(\mu, \kappa)$ is an alternative to vMF, also parametrized by a mean direction $\mu\in\mathbb{R}^{d}$ and a concentration parameter $\kappa \in \mathbb{R}$ with $\lVert \mu\rVert_2=1$ and $\kappa \geq 0$. The PDF of a PS distribution is given by,
\begin{align*}
    p(x; \mu, \kappa) = (1+\mu^\top x)^\kappa \cdot C_d(\kappa),
    \  \text{where} \ 
    C_d(\kappa) = \left\{2^{\alpha + \beta} \pi^\beta \frac{\Gamma(\alpha)}{\Gamma(\alpha + \beta)} \right\}^{-1},
\end{align*}
and $\alpha = (d-1)/2+\kappa, \beta = (d-1)/2$.


Similarly, following Equation \ref{eq:z_t}, assuming that $\bm{z}^t \sim \operatorname{PS}(\mu(t), \kappa(t))$ for text $t\in\mathcal{T}$, for any image $i\in\mathcal{I}$, injecting the log likelihood of PS distribution in Equation \ref{eq:object} results in the PS kernel $L_{\text{PS}}$,
{
\small
\begin{align*}
L_{\text{PS}}(r, s) =& \kappa(t_r)\ln(1+\mu(t_r)^\top z^I_s) - (d-1+\kappa(t_r))\ln 2 \\
-& \ln \Gamma\left(\frac{d-1}{2}+\kappa(t_r)\right) + \ln \Gamma(d-1+\kappa(t_r)), \ \forall r, s\in [B].
\end{align*}
}

\subsection{Uncertainty quantification}
We quantify text‐embedding uncertainty as $u(t)=1/\kappa(t)$, due to the fact that for both the vMF and PS distributions on $\mathbb{S}^{d-1}$, the (angular) variance $\sigma^2$ is a strictly decreasing function of $\kappa$. Equivalently, larger $\kappa$ concentrates mass more tightly around the mean direction (lower $\sigma^2$), so $1/\kappa$ grows monotonically with the distribution’s variance, providing a natural scalar value for uncertainty estimate for probabilistic embeddings.

\subsection{Connections to CLIP}
Equation \ref{eq:object} can be rewritten in the following form,
\begin{align*}
    \theta = \underset{\theta\in\Theta}{\arg\min} -\frac{1}{2B} \sum_{n=1}^B
    \left[\ln \frac{ \exp{( \tau \delta(n, n) )} }
    {\sum_{m=1}^B \exp{( \tau \delta(n, m) )} } 
    +
    \ln \frac{\exp{( \tau \delta(n, n) )} }
    {\sum_{m=1}^B \exp{( \tau \delta(m, n) )} }
    \right].
\end{align*}
Denoting $\operatorname{CosSim}(r,s) = \mu(t_r)^\top z_s^I$, for any $r, s\in[B]$ we have,
\begin{equation}\label{eq:unified_form}
\begin{aligned}
&\text{for CLIP: } \delta_{\text{CLIP}}(r, s) = \operatorname{CosSim}(r,s),\\
&\text{for AsymVLM$_{\operatorname{vMF}}$: } \delta_{\text{vMF}}(r, s) = \kappa(t_r)\cdot \operatorname{CosSim}(r,s) + F_d(\kappa(t_r)),\\
&\text{for AsymVLM$_{\operatorname{PS}}$: } \delta_{\text{PS}}(r, s) = \kappa(t_r)\ln(1+\operatorname{CosSim}(r,s) + \ln C_d(\kappa(t_r)).
\end{aligned}
\end{equation}

Given that $\kappa(t_r)\geq 0$, it is clear that the latter two objectives are monotonically increasing w.r.t. $\delta_{\text{CLIP}}$ with an additional parameter $\kappa$. These can be interpreted as \emph{extensions of the CLIP loss by introducing $\kappa$, the concentration parameter, to model the variance of the text embedding without drifting from the pre-trained models using cosine similarities}.

\begin{mdframed}
Equation \ref{eq:unified_form} demonstrates AsymVLM's consistency with the underlying ideas of CLIP, while incorporating mechanisms to account for embedding variance. AsymVLM ultimately offers a more refined and probabilistically grounded approach to modeling semantic relationships between text and images. Furthermore, AsymVLM can be extended to a SigLIP variant. Inference using AsymVLM also extends beyond simply computing the cosine similarity, to utilizing the maximum likelihood principle, with a clearer statistical explanation. Details about the inference process and its SigLIP-variant are provided in Appendix \ref{appen:inference} and \ref{appen:siglip}.    
\end{mdframed}


\section{Empirical results}\label{sec:results}
We compare AsymVLM to the baselines on benchmark datasets, and subsequently present an ablation study. Further, extended analyses and demonstration on two downstream applications are presented.

\paragraph{Datasets, baselines and metrics} The MS-COCO \cite{coco_dataset} and Flickr-30k \cite{flickr_dataset} datasets are used to train the adapters. Additionally, a subset of the Conceptual Caption dataset \cite{sharma2018conceptual} of 200k samples (CC-200k) is also used. To understand the nature of learned uncertainty, the HierarCap dataset \cite{alper2024emergent} is used, containing four captions of different abstraction levels for each image. The baselines include BayesVLM, ProbVLM, ProLIP, PFE and PCME++ adapted from pretraining methods (denoted by $\text{PFE}^\star$ and $\text{PCME++}^\star$, respectively). The pre-trained VLM used in the experiments in the main text is CLIP (ViT-B/32 backbone). We assess the uncertainty quality in text embeddings by evaluating recall@1 performance in cross-modal retrieval tasks. A strong positive correlation between uncertainty and errors indicates that higher uncertainty leads to poorer embeddings and lower recall. For each task, we group results by uncertainty levels and compute Spearman's rank correlation ($S$) between uncertainty and recall \cite{sedgwick2014spearman}. We then fit a regressor ($R^2$) to determine if the performance drop is linear, and compare average Recall@1 across uncertainty levels for different methods. All experiments were repeated \emph{five times with different random seeds}, and we report the mean results with full standard deviations provided in the Appendix. The code for this paper is available at \url{https://github.com/li-ju666/asymvlm}.

\subsection{Evaluation of UQ and cross-modal retrieval}

\begin{figure}[ht]
    \centering
    \includegraphics[width=0.325\linewidth]{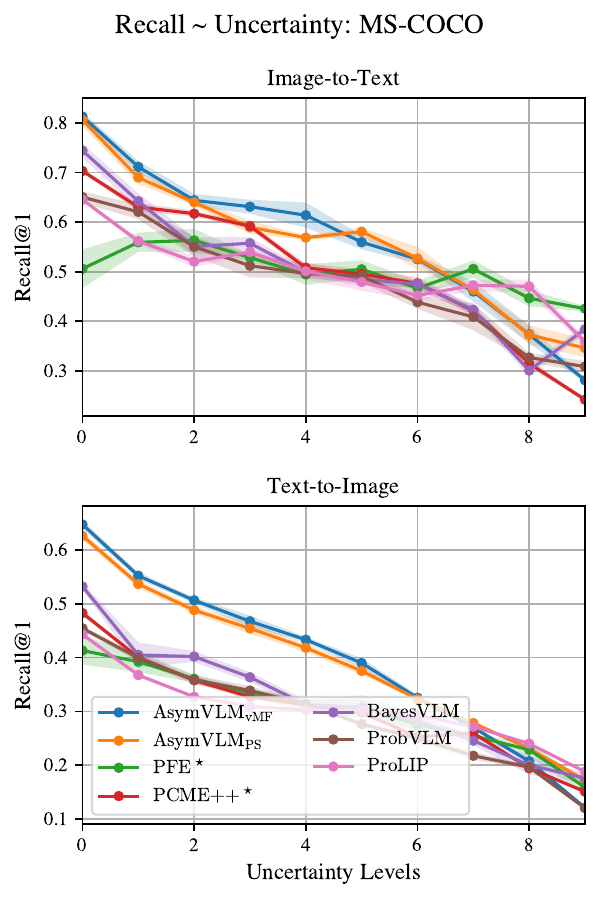}
    \includegraphics[width=0.325\linewidth]{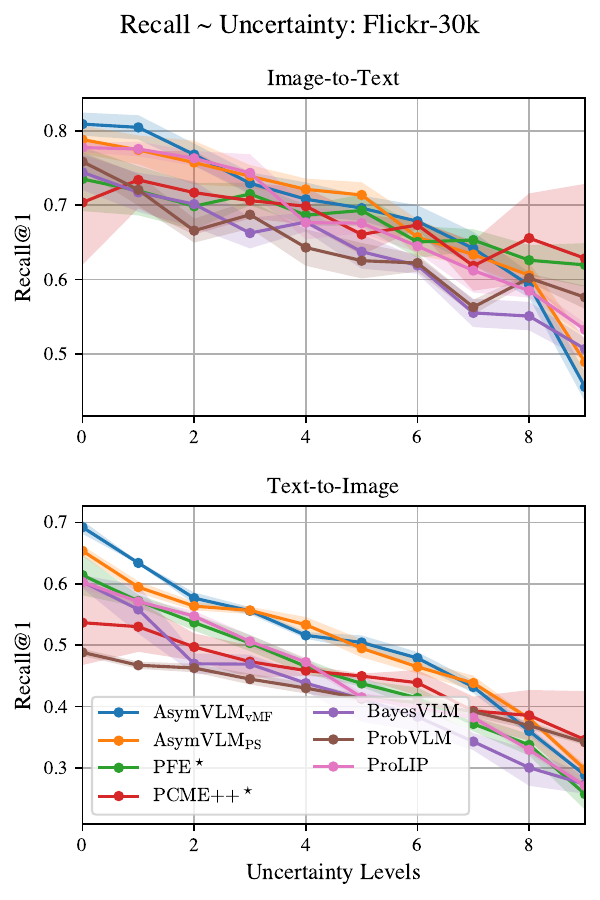}
    \includegraphics[width=0.325\linewidth]{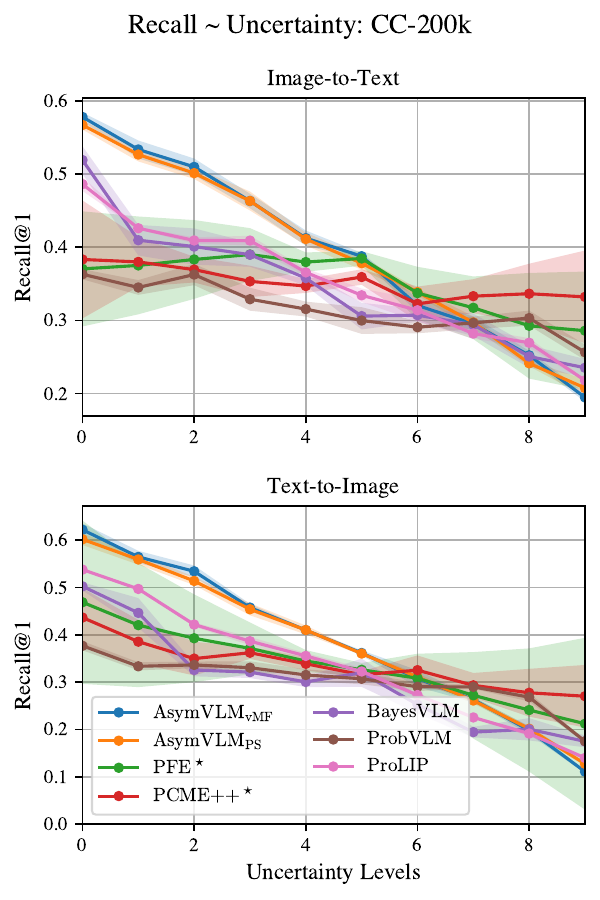}
    \caption{Recall@1 versus uncertainty levels for cross-modal retrieval tasks for various datasets. Each subplot shows the performance degradation as uncertainty increases for $\text{PFE}^\star$, $\text{PCME++}^\star$, ProbVLM, $\text{AsymVLM}_\text{vMF}$, and $\text{AsymVLM}_\text{PS}$. Ideally Recall@1 should decrease monotonically.}
    \label{fig:unc_ret}
    \vspace{\shrinklength}
\end{figure}

\vspace{-\shrinklength}
\begin{table}[ht]
\scriptsize
\centering
\caption{Evaluation of model performance and uncertainty on benchmarks (MS-COCO, Flickr-30k, CC-200k) for Image-to-Text (i2t) and Text-to-Image (t2i) retrieval. We report Recall@1 $\uparrow$, regression fit $R^2 \uparrow$ between uncertainty and retrieval error, and Spearman’s rank $S\downarrow$ correlation between uncertainty and retrieval error. \textbf{Bold} font denotes the best results and \underline{underline} denotes the second best results. Each value is the mean of 5 runs with standard deviation in the Appendix, Table \ref{app_tab:unc_ret_clip}.}
\label{tab:unc_ret}
\vspace{0.5em}
\begin{tabular}{ll ccc ccc}
\toprule
& & \multicolumn{3}{c}{\textbf{\textsc{t2i}}} & \multicolumn{3}{c}{\textbf{\textsc{i2t}}} \\
\cmidrule(lr){3-5} \cmidrule(lr){6-8}
\textbf{\textsc{Dataset}} & \textbf{\textsc{Method}} & Recall@1 $\uparrow$ & $R^2\uparrow$  & $S\downarrow$ & Recall@1 $\uparrow$ & $R^2\uparrow$  & $S\downarrow$ \\
\midrule

& $\text{PFE}^\star$ &0.500 &0.558 &-0.722 &0.304 &0.946 &-0.988 \\
& $\text{PCME++}^\star$ &0.500 &0.931 &\textbf{-0.996} &0.304 &0.948 &-0.990 \\
& $\text{BayesVLM}$ &0.506 &0.884 &-0.976 &0.323 &0.932 &-0.985 \\
\textbf{\textsc{MS-COCO}}
& $\text{ProbVLM}$  &0.480 &\textbf{0.951} &-0.981 &0.293 &0.979 &\textbf{-1.000} \\
& $\text{ProLip}$  &0.500 &0.808 &-0.908 &0.304 &0.876 &-0.985 \\
& $\text{AsymVLM}_{\text{vMF}}$ &\textbf{0.561} &\underline{0.948} &\underline{-0.988} &\textbf{0.392} &\underline{0.984} &\textbf{-1.000} \\

& $\text{AsymVLM}_{\text{PS}}$ &\underline{0.558} &0.937 &-0.984 &\underline{0.390} &\textbf{0.989} &\textbf{-1.000} \\
\midrule

& $\text{PFE}^\star$ &0.680 &0.675 &-0.832 &0.451 &0.955 &\textbf{-0.998} \\
& $\text{PCME++}^\star$ &0.679 &0.455 &-0.178 &0.451 &0.900 &-0.918\\
& $\text{BayesVLM}$ &0.637 &\textbf{0.916} &\textbf{-0.976} &0.425 &0.934 &-0.973\\
\textbf{\textsc{Flickr-30k}}
& $\text{ProbVLM}$  &0.646 &0.826 &-0.914 &0.422 &\textbf{0.964} &-0.985 \\
& $\text{ProLip}$  &0.678 &0.829 &-0.954 &0.450 &0.928 &-0.978 \\
& $\text{AsymVLM}_{\text{vMF}}$ &\textbf{0.688} &\underline{0.860} &\textbf{-0.976} &\textbf{0.504} &\underline{0.960} &\underline{-0.995} \\

& $\text{AsymVLM}_{\text{PS}}$ &\textbf{0.688} &0.854 &-0.947 &\underline{0.498} &0.946 &-0.993 \\
\midrule

& $\text{PFE}^\star$ &0.352 &0.857 &-0.521 &0.336 &0.988 &-0.595 \\
& $\text{PCME++}^\star$ &0.352 &0.198 &-0.148 &0.336 &0.779 &-0.887 \\
& $\text{BayesVLM}$ &0.347 &0.905 &-0.968 &0.304 &0.874 &-0.937\\
\textbf{\textsc{CC-200k}}
& $\text{ProbVLM}$   &0.316 &0.772 &-0.837 &0.302 &0.775 &-0.965 \\
& $\text{ProLip}$  &0.351 &0.968 &-0.990 &0.335 &\underline{0.992} &\textbf{-1.000} \\
& $\text{AsymVLM}_{\text{vMF}}$  &\textbf{0.395} &\underline{0.990} &\textbf{-1.000} &\textbf{0.383} & 0.991 & -0.998 \\
& $\text{AsymVLM}_{\text{PS}}$ &\underline{0.393} &\textbf{0.992} &\textbf{-1.000} &\underline{0.380} &\textbf{0.993} &\textbf{-1.000} \\
\bottomrule
\vspace{\shrinklength}
\end{tabular}
\end{table}

Figure~\ref{fig:unc_ret} and Table~\ref{tab:unc_ret} summarize the main results comparing the proposed AsymVLM with baseline methods. For uncertainty quantification, results indicate that while all methods capture text uncertainty to some degree, AsymVLM consistently outperforms its counterparts. As the uncertainty level increases, AsymVLM is the only method for which Recall@1 decreases strictly monotonically. Moreover, AsymVLM also meets or exceeds competing approaches on both \(R^2\) and \(S\) scores.

In cross-modal retrieval, $\text{AsymVLM}_{\text{vMF}}$ and $\text{AsymVLM}_{\text{PS}}$ achieve the highest and second-highest Recall@1 scores across all datasets. Note that the retrieval performances of $\text{PFE}^\star$ and $\text{PCME++}^\star$ are identical, as they only model the variance of embeddings while while keeping embedding directions constant, resulting in performance identical to the pre-trained models. In contrast, both AsymVLM and ProbVLM adapt the directions and learn the uncertainty of the embeddings during post-hoc training. Although ProbVLM achieves reasonable uncertainty estimates overall, it consistently under-performs pre-trained models in recall, whereas AsymVLM delivers richer uncertainty estimates and improves cross-modal retrieval performance.

Furthermore, the similar performance of $\text{AsymVLM}_{\text{vMF}}$ and $\text{AsymVLM}_{\text{PS}}$ in both cross-modal retrieval and uncertainty quantification shows the robustness of AsymVLM w.r.t. the choice of the spherical distribution, suggesting possibilities of exploring other spherical or directional distributions.

\subsection{Ablation study}
We conduct an ablation study on the two important components of AsymVLM: the asymmetric architecture of the adapter, and the use of a spherical distribution for modeling. To assess the impact of two components, we designed following adapters with experimental results presented in Figure \ref{fig:ablation}:
\begin{itemize}[leftmargin=1.5em, itemsep=0.1em, topsep=0em]
    \item \textbf{AsymVLM(image):} This variant modifies AsymVLM by only mapping deterministic image embeddings to spherical distributions, leaving text embeddings deterministic.
    \item \textbf{SymVLM:} This variant map both deterministic image and text embeddings to spherical distributions, with InfoNCE applied to symmetrized log likelihood kernel.
    \item \textbf{AsymVLM($\mathcal{G}$):} This variant has identical architecture with AsymVLM but maps deterministic text embedding on Gaussian distribution, followed by normalization. The kernel of the loss function is replaced by Gaussian log likelihood.
\end{itemize}

\paragraph{Asymmetric architecture}
Both AsymVLM(image) and SymVLM perform worse than the original AsymVLM with respect to uncertainty quantification. Specifically, AsymVLM(image) struggles to model image uncertainty, showing no decrease in Recall@1 as uncertainty increases. Although SymVLM exhibits a slight decrease in Recall@1 with rising uncertainty, uncertainty estimates remain sub-optimal to AsymVLM. AsymVLM(image) achieves average Recall@1 comparable to AsymVLM while SymVLM performs much worse than either AsymVLM or AsymVLM(image). The results underscore the importance of an asymmetric architecture for uncertainty quantification and cross-modal retrieval performance.

\paragraph{Choice of spherical distributions}
For both Image-to-Text and Text-to-Image retrieval tasks, $\text{AsymVLM}(\mathcal{G})$ yields reasonable uncertainty estimates. As uncertainty levels increase, the Recall@1 of $\text{AsymVLM}(\mathcal{G})$ declines almost linearly, exhibiting behaviour similar to AsymVLM. However, the overall Recall@1 is inferior compared to AsymVLM. This emphasizes not only the effectiveness of the asymmetric architecture for uncertainty quantification, but also the necessity of spherical distributions for the post-hoc adaptation of pre-trained VLMs.

\begin{figure}[ht]
    \centering
    \includegraphics[width=\linewidth]{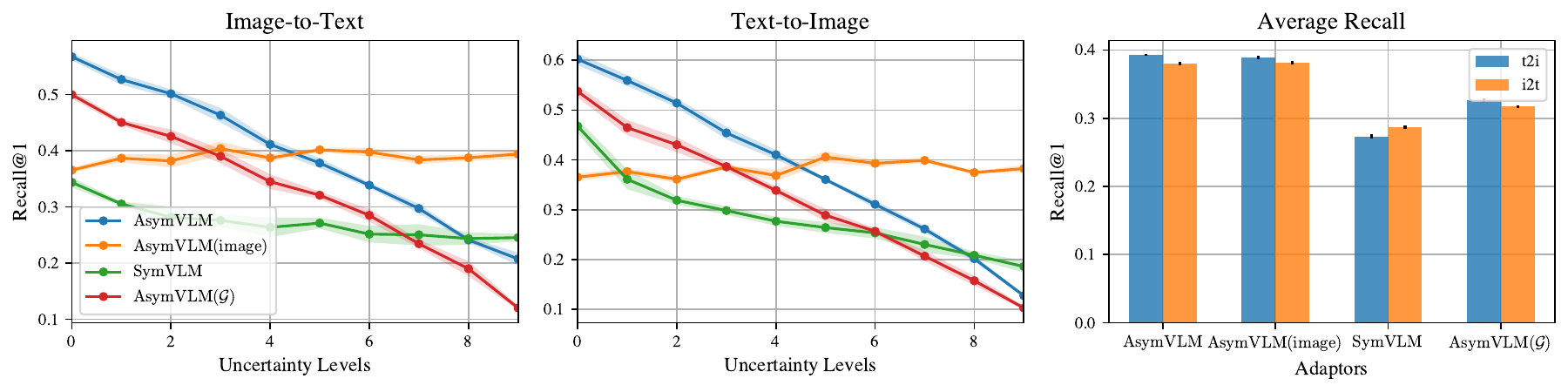}
    \caption{Ablation study results for AsymVLM: The left and middle panel shows Recall@1 performance for cross-modal retrieval across various uncertainty levels. The right panel summarizes average Recall@1 for both retrieval tasks.}
    \label{fig:ablation}
    \vspace{\shrinklength}
\end{figure}

\subsection{Understanding the uncertainty estimates}
To better understand the uncertainty estimates produced by AsymVLM, we analyze caption estimates on the HierarCap dataset. In this dataset, each image is paired with hierarchical captions spanning four levels -- from general descriptions (Level 0) to detailed image descriptions (Level 3). Figure \ref{fig:unc_study} illustrates the following observations.

\begin{itemize}[leftmargin=1.5em, itemsep=0.1em]
    \item \textbf{AsymVLM captures hierarchical caption structure} Level 0 captions with general descriptions exhibit higher uncertainty estimates, indicating greater ambiguity, while Level 3 captions with the most detail generally show lower uncertainty. This pattern aligns with the inherent ambiguity of language and the corresponding uncertainty estimates provided by AsymVLM.
    
    \item \textbf{Detailed captions enhance image retrieval likelihood} The likelihood of the image embedding increases as more detailed descriptions are provided, as shown by the shift towards higher log likelihood values from Level 0 to 3. This suggests that detailed captions better capture the nuances of the image, facilitating more accurate image retrieval.

    \item \textbf{Longer captions exhibit reduced ambiguity} The results indicate that a higher token count is associated with lower uncertainty in the text, as depicted by the downward trend in uncertainty values. This is aligned with the expectation, as shorter texts generally tend to be more ambiguous, while longer captions provide more context, reducing ambiguity.
\end{itemize}



\begin{figure}[h]
    \centering
    \includegraphics[width=\linewidth]{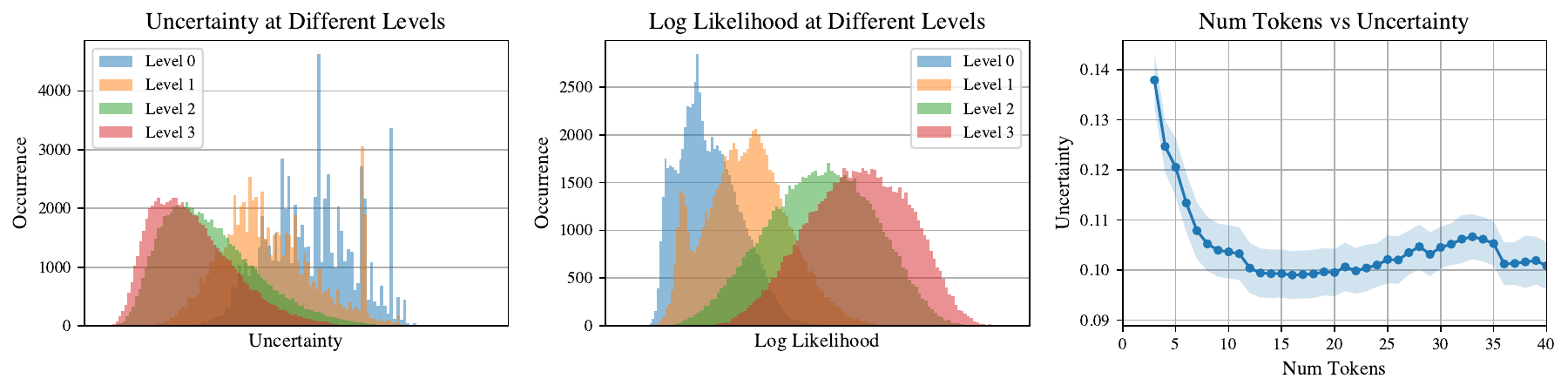}
    \vspace{1.2\shrinklength}
    \caption{Uncertainty and likelihood analysis with the HierarCap dataset. The left panel shows the distribution of uncertainty estimates across hierarchical caption levels, showing decreasing uncertainty from Level 0 to 3. The middle panel illustrates the log likelihood distribution of image embeddings, with increased likelihood as caption detail increases. The right panel shows the relationship between the number of tokens and uncertainty, indicating reduced uncertainty with longer captions.}
    \label{fig:unc_study}
    \vspace{\shrinklength}
\end{figure}

\subsection{Applications}\label{sec:res_app}
Below, we present two example applications of the uncertainty estimates obtained from AsymVLM, demonstrating their potential use cases.

\paragraph{Handling none-of-the-above}
\begin{wraptable}{r}{0.52\textwidth}
\vspace{\shrinklength}
\scriptsize
  \centering
  \caption{None-of-the-above-aware zero-shot classification using dummy prompts. The table reports the accuracy on positive samples and negative samples.}
  \begin{tabular}{llcc}
    \toprule
    \textbf{\textsc{Dummy Class}} & \textbf{\textsc{Method}} & \textbf{\textsc{Pos. Acc.}} & \textbf{\textsc{Neg. Acc.}} \\
    \midrule
    \multirow{3}{*}{\texttt{"a photo"}} 
      & CLIP & \textbf{0.888} & 0.009 \\
      & Determ. FT & 0.769 & 0.115\\
      & $\text{AsymVLM}_{\text{PS}}$ 
      & 0.845 & \underline{0.547}\\
      & $\text{AsymVLM}_{\text{vMF}}$
      & \underline{0.857} & \textbf{0.587}\\

    \midrule
    \multirow{4}{*}{\shortstack{\texttt{"a photo of} \\ \texttt{an object"}}} 
      & CLIP & \textbf{0.888} & 0.009 \\
      & Determ. FT & 0.778 & 0.037\\
      & $\text{AsymVLM}_{\text{PS}}$ 
      & 0.849 & \textbf{0.609}\\
      & $\text{AsymVLM}_{\text{vMF}}$
      & \underline{0.858} & \underline{0.557}\\

    \midrule
    \multirow{4}{*}{\shortstack{\texttt{"a photo of} \\ \texttt{something"}}} 
      & CLIP & \textbf{0.888} & 0.009 \\
      & Determ. FT & 0.769 & 0.125\\
      & $\text{AsymVLM}_{\text{PS}}$ 
      & 0.843 & \textbf{0.595}\\
      & $\text{AsymVLM}_{\text{vMF}}$
      & \underline{0.857} & \underline{0.585}\\
    \midrule
    \multirow{2}{*}{\textbf{\textsc{None}}} 
      & Margin-Based & 0.584 & 0.579 \\
      & Threshold-Based & 0.646 & 0.560 \\
    \bottomrule

  \vspace{1.5\shrinklength}
  \end{tabular}
  \label{tab:robust_zs}
\end{wraptable}

For zero-shot classification, strategies to enable a model to say \texttt{"none-of-the-above"} rather than assigning an incorrect label include threshold-based rejection, margin-based rejection and other techniques. AsymVLM provides an alternative approach, by introducing a dummy prompt such as \texttt{"a photo"} to signal \texttt{"none-of-the-above"}. We evaluate these methods on a combined test set consisting of CIFAR-10 and CIFAR-100 samples, performing zero-shot classification over CIFAR-10 classes. Ideally, CIFAR-10 samples (positive samples) should be correctly classified into their respective categories, while CIFAR-100 samples (negative samples) should be rejected as \texttt{"none-of-the-above"}. AsymVLM consistently outperforms other methods (Table~\ref{tab:robust_zs}), achieving higher accuracy on both positive and negative samples.

\paragraph{Robust fine-tuning}
\begin{table}[ht]
\scriptsize
  \centering
  \caption{Zero‐shot classification accuracy. "None" denotes the baseline without fine‐tuning.}

  \begin{tabular}{llcccc}
    \toprule
    & & \multicolumn{4}{c}{\textbf{\textsc{Validated on}}} \\
    \cmidrule(lr){3-6}

    \textbf{\textsc{Fine-Tuned}} & \textbf{\textsc{Method}} & \textbf{\textsc{CIFAR-10}} & \textbf{\textsc{CIFAR-100}} & \textbf{\textsc{STL-10}} & \textbf{\textsc{ImageNet-1k}}\\
    \midrule
    \multirow{3}{*}{\textbf{\textsc{MS-COCO}}} 
      & Determ. FT
      &0.684 &0.300 &0.829 & 0.395\\
      & $\text{AsymVLM}_{\text{PS}}$
      &\textbf{0.847}  &\underline{0.470}  &\textbf{0.952} & \underline{0.502} \\
      & $\text{AsymVLM}_{\text{vMF}}$
      &\underline{0.837} &\textbf{0.477} &\underline{0.940} & \textbf{0.507} \\

    \midrule
    \multirow{3}{*}{\textbf{\textsc{Flickr-30k}}} 
      & Determ. FT
      &0.688 &0.342 &0.874 & 0.369\\
      & $\text{AsymVLM}_{\text{PS}}$
      &\underline{0.791} &\underline{0.413} &\textbf{0.920} & \underline{0.451}\\
      & $\text{AsymVLM}_{\text{vMF}}$
      &\textbf{0.792} &\textbf{0.422} &\underline{0.918} & \textbf{0.464}\\

    \midrule
    \multirow{3}{*}{\textbf{\textsc{CC-200k}}} 
      & Determ. FT
      &0.779  &0.411  &0.922 & 0.450\\
      & $\text{AsymVLM}_{\text{PS}}$
      &\underline{0.861} &\textbf{0.542}  &\textbf{0.968}  & \underline{0.520}\\
      & $\text{AsymVLM}_{\text{vMF}}$
      &\textbf{0.866} &\textbf{0.542}  &\underline{0.967} & \textbf{0.527} \\
      \midrule
      \multirow{1}{*}{\textbf{\textsc{None}}} 
      & CLIP & 0.888 & 0.642 & 0.974 & 0.632\\
    \bottomrule
  \end{tabular}
  \label{tab:zero_shot}
\end{table}
Pre-trained VLMs are renowned for their zero-shot classification performance. However, fine-tuning these models on smaller datasets typically impairs this capability on other datasets. Table~\ref{tab:zero_shot} compares AsymVLM with its deterministic counterpart, denoted by Determ.FT, and the foundation model without fine-tuning. Models fine-tuned with AsymVLM consistently outperform those fine-tuned deterministically across all settings. This suggests that AsymVLM preserves the zero-shot classification proficiency acquired during pre-training to a greater extent, whereas the deterministic approach is more prone to catastrophic forgetting. Therefore, AsymVLM offers a more robust fine-tuning strategy for VLMs.

\section{Conclusion and limitation}\label{sec:conclusion}


We introduced AsymVLM, a post-hoc probabilistic adaptation framework for pretrained vision–language models that leverages asymmetric uncertainty in textual and visual embeddings. Text embeddings are modeled as random variables using directional distributions (e.g., von Mises–Fisher and power spherical) on the unit hypersphere, while image embeddings remain deterministic. Incorporating these spherical likelihoods into an InfoNCE-based objective enables effective modeling of wide-variance aleatoric uncertainty in language.

While AsymVLM delivers impressive performance in cross-modal retrieval with reliable uncertainty estimates, it only models text uncertainty. Future work could extend uncertainty modeling to image embeddings via data augmentation or similar methods. Also, although example applications of the uncertainty estimates are provided, further exploration into additional downstream tasks is warranted.

\section*{Broader impact}
Uncertainty estimates for text are vital in high-stakes applications involving VLMs, such as medical diagnostics where ambiguous language must be treated carefully. AsymVLM enhances trust in textual inputs, enabling safer and more robust applications of VLMs in uncertainty-sensitive domains.

\section*{Acknowledgement}
The computations/data handling were enabled by the Berzelius resource provided by the Knut and Alice Wallenberg Foundation at the National Supercomputer Centre, and by the Alvis resource provided by Chalmers e-Commons at Chalmers. LJ acknowledges funding from the Centre for Interdisciplinary Mathematics at Uppsala University and NAISS through Project 2024/22-1358. AH and PS acknowledge
support from the Swedish Research Council through grant agreement nos. 2023-05167 and 2023-05593 respectively. EV's research was partially supported by the
Kjell and Märta Beijer Foundation, and the Wallenberg AI, Autonomous Systems and Software Program (WASP) funded by the Knut and Alice Wallenberg Foundation.
PS also acknowledges support from the the Knut and Alice Wallenberg foundation through the Program for Academic leaders in Life Science (PALS). 
The authors thank Aleksandr Karakulev, Csongor Horváth and Mayank Nautiyal for valuable discussions and feedback.

\clearpage
\bibliographystyle{plainnat}
\bibliography{references}

\clearpage
\newpage
\section*{NeurIPS Paper Checklist}
\begin{enumerate}

\item {\bf Claims}
    \item[] Question: Do the main claims made in the abstract and introduction accurately reflect the paper's contributions and scope?
    \item[] Answer: \answerYes{}
    \item[] Justification: Yes, our main claims made in the abstract and introduction are reflected by the paper's contributions through the experimental results in Section \ref{sec:results}.

\item {\bf Limitations}
    \item[] Question: Does the paper discuss the limitations of the work performed by the authors?
    \item[] Answer: \answerYes{}
    \item[] Justification: The main limitations of the work, as well as future directions that may address some of these limitations, are discussed in Section \ref{sec:conclusion}.

\item {\bf Theory assumptions and proofs}
    \item[] Question: For each theoretical result, does the paper provide the full set of assumptions and a complete (and correct) proof?
    \item[] Answer: \answerYes{}
    \item[] Justification: Lemmas and theoretical results are well defined, and used to motivate and ground the proposed approach, with explicit assumptions and proofs as applicable.

\item {\bf Open access to data and code}
    \item[] Question: Does the paper provide open access to the data and code, with sufficient instructions to faithfully reproduce the main experimental results, as described in supplemental material?
    \item[] Answer: \answerYes{}
    \item[] Justification: The code for the paper is attached in the supplementary materials for the submission, and will be open-sourced through GitHub. The data for the paper is publicly available.

\item {\bf Experimental setting/details}
    \item[] Question: Does the paper specify all the training and test details (e.g., data splits, hyperparameters, how they were chosen, type of optimizer, etc.) necessary to understand the results?
    \item[] Answer: \answerYes{}
    \item[] Justification: All necessary details for experiments are provided in the Appendix.

\item {\bf Experiment statistical significance}
    \item[] Question: Does the paper report error bars suitably and correctly defined or other appropriate information about the statistical significance of the experiments?
    \item[] Answer: \answerYes{}
    \item[] Justification: All experiments were repeated five times with different random seeds, and we report the mean results in the main text, with full standard deviations provided in the Appendix.

\item {\bf Experiments compute resources}
    \item[] Question: For each experiment, does the paper provide sufficient information on the computer resources (type of compute workers, memory, time of execution) needed to reproduce the experiments?
    \item[] Answer: \answerYes{}
    \item[] Justification: All computing resources utilized for the experiments in the paper are listed in the Appendix.
    
\item {\bf Code of ethics}
    \item[] Question: Does the research conducted in the paper conform, in every respect, with the NeurIPS Code of Ethics \url{https://neurips.cc/public/EthicsGuidelines}?
    \item[] Answer: \answerYes{}
    \item[] Justification: This work does not entail any harmful consequences as laid out in the Code of Ethics.

\item {\bf Broader impacts}
    \item[] Question: Does the paper discuss both potential positive societal impacts and negative societal impacts of the work performed?
    \item[] Answer: \answerYes{}
    \item[] Justification: Potential broader impacts of the paper are discussed in the main text. 
    
\item {\bf Safeguards}
    \item[] Question: Does the paper describe safeguards that have been put in place for responsible release of data or models that have a high risk for misuse (e.g., pretrained language models, image generators, or scraped datasets)?
    \item[] Answer: \answerNA{}
    \item[] Justification: No such models or datasets are involved in the scope of this paper.

\item {\bf Licenses for existing assets}
    \item[] Question: Are the creators or original owners of assets (e.g., code, data, models), used in the paper, properly credited and are the license and terms of use explicitly mentioned and properly respected?
    \item[] Answer: \answerYes{}
    \item[] Justification: All code, data and models used in the paper are properly cited, and the license and terms of use are respected as specified.

\item {\bf New assets}
    \item[] Question: Are new assets introduced in the paper well documented and is the documentation provided alongside the assets?
    \item[] Answer: \answerYes{}
    \item[] Justification: The code for the paper is well documented with license, and terms of use being appended in the supplementary materials.

\item {\bf Crowdsourcing and research with human subjects}
    \item[] Question: For crowdsourcing experiments and research with human subjects, does the paper include the full text of instructions given to participants and screenshots, if applicable, as well as details about compensation (if any)? 
    \item[] Answer: \answerNA{}
    \item[] Justification: No such models or datasets are involved in the scope of this paper.

\item {\bf Institutional review board (IRB) approvals or equivalent for research with human subjects}
    \item[] Question: Does the paper describe potential risks incurred by study participants, whether such risks were disclosed to the subjects, and whether Institutional Review Board (IRB) approvals (or an equivalent approval/review based on the requirements of your country or institution) were obtained?
    \item[] Answer: \answerNA{}
    \item[] Justification: There are no human participants involved in the scope of this paper.

\item {\bf Declaration of LLM usage}
    \item[] Question: Does the paper describe the usage of LLMs if it is an important, original, or non-standard component of the core methods in this research? Note that if the LLM is used only for writing, editing, or formatting purposes and does not impact the core methodology, scientific rigorousness, or originality of the research, declaration is not required.
    \item[] Answer: \answerNo{}
    \item[] Justification: LLMs are not used for any important, original or non-standard component of the core methods.

\end{enumerate}


\clearpage
\appendix
\setcounter{table}{0}
\renewcommand{\thetable}{A.\arabic{table}}

\section{Method Details}
\subsection{Approximation of the log normalizer of vMF distribution}\label{app:approx}
The normalizer of vMF distribution $C_d(\kappa)$ is given by,
\begin{align*}
    C_d(\kappa) = \frac{\kappa^{d/2 - 1}}{(2\pi)^{d/2} I_{d/2 - 1}(\kappa)}.
\end{align*}

\begin{mdframed}
\begin{remark}[Derivatives of modified Bessel functions \cite{lozier2003nist}]\label{app:remark_bessel}
Let $I_v(z)$ denote modified Bessel function of $v$-th order. For $k=0, 1, 2, \dots,$
\begin{align*}
    \left(\frac{1}{z} \frac{d}{d z}\right)^k (z^v I_v(z)) = z^{v-k}I_{v-k}(z).
\end{align*}
\end{remark}
\end{mdframed}

\begin{mdframed}
\begin{lemma} \label{app:lemma_derivative}
The first order derivate of $I_v(z)$ is,
\begin{align*}
    \frac{d}{dz} I_v(z) = I_{v+1}(z) +\frac{v}{z}I_{v}(z).
\end{align*}
\end{lemma}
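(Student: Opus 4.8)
The plan is to derive the first-order derivative of $I_v(z)$ directly from the recurrence in Remark~\ref{app:remark_bessel} by specializing to $k=1$. Setting $k=1$ in the identity
\begin{align*}
    \left(\frac{1}{z}\frac{d}{dz}\right)^k (z^v I_v(z)) = z^{v-k} I_{v-k}(z)
\end{align*}
gives $\frac{1}{z}\frac{d}{dz}\bigl(z^v I_v(z)\bigr) = z^{v-1} I_{v-1}(z)$, i.e. $\frac{d}{dz}\bigl(z^v I_v(z)\bigr) = z^{v} I_{v-1}(z)$.

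Next I would expand the left-hand side with the product rule: $\frac{d}{dz}\bigl(z^v I_v(z)\bigr) = v z^{v-1} I_v(z) + z^v I_v'(z)$. Equating this with $z^v I_{v-1}(z)$ and dividing through by $z^v$ yields $I_v'(z) = I_{v-1}(z) - \frac{v}{z} I_v(z)$. This is the ``backward'' form of the identity; to reach the stated ``forward'' form I would invoke the standard three-term recurrence for modified Bessel functions, $I_{v-1}(z) - I_{v+1}(z) = \frac{2v}{z} I_v(z)$, hence $I_{v-1}(z) = I_{v+1}(z) + \frac{2v}{z} I_v(z)$. Substituting this into the expression for $I_v'(z)$ gives $I_v'(z) = I_{v+1}(z) + \frac{2v}{z}I_v(z) - \frac{v}{z}I_v(z) = I_{v+1}(z) + \frac{v}{z}I_v(z)$, which is exactly the claimed formula.

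The only mild subtlety is that the three-term recurrence $I_{v-1}(z) - I_{v+1}(z) = \frac{2v}{z}I_v(z)$ is itself a standard fact (also found in the NIST handbook \cite{lozier2003nist}), so strictly speaking one either cites it or re-derives it by applying Remark~\ref{app:remark_bessel} a second time, once with $v$ and once with $-v$ (using $I_{-v}$ relations) — but for the purposes of this lemma citing it is cleanest. Alternatively, one can bypass the three-term recurrence entirely by also applying Remark~\ref{app:remark_bessel} in the complementary form $\left(\frac{1}{z}\frac{d}{dz}\right)^k (z^{-v} I_v(z)) = z^{-v-k} I_{v+k}(z)$, which at $k=1$ gives $\frac{d}{dz}\bigl(z^{-v} I_v(z)\bigr) = z^{-v-1} I_{v+1}(z)$, i.e. $-v z^{-v-1} I_v(z) + z^{-v} I_v'(z) = z^{-v-1} I_{v+1}(z)$, and multiplying by $z^v$ immediately yields $I_v'(z) = I_{v+1}(z) + \frac{v}{z} I_v(z)$ with no recurrence needed. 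I expect the main obstacle — such as it is — to be merely bookkeeping: making sure the complementary $z^{-v}$ form of Remark~\ref{app:remark_bessel} is legitimate (it is, being the $v \mapsto -v$ substitution together with the parity relation of $I_v$), so that the derivation is self-contained rather than reliant on an un-cited recurrence.
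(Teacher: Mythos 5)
Your proposal is correct and follows essentially the same route as the paper's proof: set $k=1$ in Remark~\ref{app:remark_bessel}, expand with the product rule to get $I_v'(z) = I_{v-1}(z) - \frac{v}{z}I_v(z)$, and then apply the three-term recurrence $I_{v-1}(z) - I_{v+1}(z) = \frac{2v}{z}I_v(z)$ (which the paper likewise cites from the NIST handbook) to reach the stated form. The alternative derivation you sketch via the complementary $z^{-v}$ identity is a valid shortcut, but it is not needed and is not the route the paper takes.
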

\begin{proof}
Using Remark \ref{app:remark_bessel}, let $k=1$ and we have,
\begin{align*}
    \frac{1}{z}\frac{d}{dz} z^v I_v(z) &= z^{v-1} I_{v-1}(z),\\
    vz^{v-1} I_v(z) + z^v \frac{d I_v(z)}{dz} &= z^v I_{v-1}(z).\\
\end{align*}
Reorganizing the equation gives us,
\begin{align*}
    \frac{d}{dz} I_v(z) = I_{v-1}(z) - \frac{v}{z}I_{v}(z).
\end{align*}
Further we use the Recurrence Relations of modified Bessel function \cite{lozier2003nist},
\begin{align*}
    I_{v-1}(z) - I_{v+1}(z) = \frac{2v}{z} I_v(z),
\end{align*}
and obtain,
\begin{align*}
    \frac{d}{dz} I_v(z) = I_{v+1}(z) +\frac{v}{z}I_{v}(z).
\end{align*}
\end{proof}
\end{mdframed}

Note that,
\begin{align*}
    \frac{d}{d \kappa}\ln C_d(\kappa) 
    &= \frac{d}{d \kappa}\left[ (d/2-1)\ln \kappa - d/2 \ln 2\pi - \ln I_{d/2-1}(\kappa) \right], \\
    &= \frac{d/2-1}{\kappa} - \frac{d I_{d/2-1}(\kappa)/d \kappa}{I_{d/2-1}(\kappa)}.
\end{align*}
Using Lemma \ref{app:lemma_derivative}, we have,
\begin{align*}
    \frac{d}{d \kappa}\ln C_d(\kappa) 
    &= \frac{d/2-1}{\kappa} - \frac{I_{d/2}(\kappa) + \frac{d/2-1}{\kappa}I_{d/2-1}(\kappa)}{I_{d/2-1}(\kappa)}, \\
    &= -\frac{I_{d/2}(\kappa)}{I_{d/2-1}(\kappa)}.
\end{align*}

\citet{ruiz2016new} give a tight lower and upper bound for $-\frac{I_{d/2}(\kappa)}{I_{d/2-1}(\kappa)}$, given by following Remark:
\begin{mdframed}
\begin{remark}
[Tight lower and upper bound for the ratio of modified Bessel functions \cite{ruiz2016new}]
Let $I_v(z)$ be the modified Bessel function of the first kind at order $v$. We have following bound,
\begin{align*}
    \frac{z}{v-1/2+\sqrt{(v+1/2)^2 + z^2}} < \frac{I_v(z)}{I_{v-1}(z)} <
    \frac{z}{v-1/2 + \sqrt{(v-1/2)^2 + z^2}}, v \geq 1/2.
\end{align*}
\end{remark}
\end{mdframed}

With the tight lower and upper bounds, $\frac{I_v(z)}{I_{v-1}(z)}$ can be approximated by,
\begin{align*}
    \frac{I_v(z)}{I_{v-1}(z)} \approx \frac{1}{2}(g(z) + h(z)),
\end{align*}
where $g(z)$ and $h(z)$ denote the left and right hand sides of the inequality.

To further approximate $\ln C_d(\kappa)$, by plugging $v=\frac{d}{2}$, we further integrate the function w.r.t $\kappa$, and obtain following approximates,
\begin{equation}
\begin{split}
    \ln C_d(\kappa) &\approx
    \frac{d-1}{4}\ln \left(\frac{d-1}{2} + \sqrt{\left( \frac{d-1}{2}\right)^2 + \kappa^2} \right) -\frac{1}{2}\sqrt{\left( \frac{d-1}{2} \right)^2 + \kappa^2}\\
    &+ \frac{d-1}{4}\ln \left(\frac{d-1}{2} + \sqrt{\left( \frac{d+1}{2}\right)^2 + \kappa^2}\right)
    - \frac{1}{2}\sqrt{\left( \frac{d+1}{2} \right)^2 + \kappa^2} + \text{const.}
\end{split}
\end{equation}

\subsection{Inference with maximum log likelihood} \label{appen:inference}
The inference of deterministic pre-trained VLMs (i.e. cross-modal retrieval) relies on computing the cosine similarities of each possible pairs of given texts and images, while the inference for AsymVLM has a clear statistical interpretation as maximizing log likelihood w.r.t. different image-text pairs. For Image-to-Text retrieval, given a set of text embeddings $\{\bm{z}^T_r \vert r\in[R]\}$, where $\bm{z}^T_r \sim \operatorname{vMF}(\mu(t_r), \kappa(t_r))$ or $\bm{z}^T_r \sim \operatorname{PS}(\mu(t_r), \kappa(t_r))$, and an image embeddings $z^I$, the predictor is given by,
\begin{align*}
    \hat{r} = \underset{r\in[R]}{\arg\max} \ln p_{\bm{z}^T_r}(z^I)=
    \begin{cases}
    \kappa(t_r) \cdot (\mu(t_r))^\top z^I  + F_d(\kappa(t_r)) & \text{for vMF},\\
    \kappa(t_r)\ln(1+ (\mu(t_r)))^\top z^I + \ln C_d(\kappa(t_r)) & \text{for PS}.
    \end{cases}
\end{align*}
Similarly, for Text-to-Image retrieval, given a set of image embeddings $\{z^I_s \vert s\in[S]\}$ and a text embedding $\bm{z^T}\sim \operatorname{vMF}(\mu, \kappa)$ or $\bm{z}^T\sim \operatorname{PS}(\mu, \kappa)$, the predictor is given by,
\begin{align*}
    \hat{s} = \underset{s\in[S]}{\arg\max} \ln p_{\bm{z}^T}(z^I_s)=
    \begin{cases}
    \kappa \cdot \mu^\top z^I_s  + F_d(\kappa) & \text{for vMF},\\
    \kappa \ln(1+ \mu^\top z^I_s) + \ln C_d(\kappa) & \text{for PS}.
    \end{cases}
\end{align*}

\subsection{Derivation for the SigLIP variant}\label{appen:siglip}
The objective function of SigLIP is defined as,
\begin{align*}
    -\frac{1}{B} \sum_{r=1}^B \sum_{r=1}^B \ln \frac{1}{1+\exp(\sigma(r, s)(-\tau \cdot \operatorname{CosSim}(r,s) +b))},
\end{align*}
where $\sigma(r, s)=1$ if $r=s$ and $\sigma(r, s)=-1$ if $r\neq s$. SigLIP shares the identical "kernel", i.e. cosine similarity, with CLIP, while our methods employs different kernels with variance modeling derived from vMF/PS assumptions for the text embeddings. By replacing the kernel of the objective function of SigLIP, we can also obtain the SigLIP variant of our method.

\section{Experimental Protocol}
\subsection{Adapter architecture}
The text adapter is implemented as a four-layer perceptron that takes the output of a pre-trained text encoder as its input. It consists of two hidden layers, each with 1024 dimensions and activated by the $\operatorname{ReLU}$ function, followed by an output layer that has the same dimensionality as the input, and no activation function. The un-normalized output of the final layer, denoted as $z^\prime_t$, is decomposed into $z^\prime_t = \kappa \mu_t$, where the vector $\lVert \mu_t \rVert= 1$ and the scalar $\kappa > 0$, serving as the parameters for either vMF or PS distribution. The temperature parameter in the InfoNCE objective function is trainable.

Other adapters (ProbVLM, $\text{PFE}^\star$ and $\text{PCME++}^\star$) share identical architectures as three-layer perceptron, with different number of outputs for the last layer due to the inherent different parameterization in the methods. 

\subsection{Datasets}
We use the datasets MS-COCO and Flickr-30k to train and validate the adapters since they provide well-annotated image–caption pairs. Additionally, we randomly sample 200k image-caption pairs from Conceptual Caption dataset \cite{sharma2018conceptual}, building a dataset (CC-200k) for the training and validation. For ablation studies, all methods are trained on the training set of CC-200k and evaluated on the validation set of CC-200k. For the study of understanding the learned uncertainty, the model is evaluated on HierarCap dataset \cite{alper2024emergent}, a dataset adapted from Conceptual Caption to reflect the hierarchical structure of captions. For each image, there exists four captions of different abstraction levels from general to detailed ones. For example applications, CIFAR-10, CIFAR-100 \cite{krizhevsky2009learning} and STL10 \cite{coates2011analysis} are used for zero-shot classification.

\paragraph{Metrics}
The evaluation of the quality of aleatoric uncertainty / ambiguity estimates for text embeddings is based on the recall performance in cross-modal retrieval tasks. A strong positive correlation between uncertainty and error indicates that when the model is more uncertain, it tends to produce lower-quality embeddings, which leads to poorer retrieval performance. For each task, we first group the recall results according to the uncertainty levels of the text embeddings. Then we compare Spearman's rank correlation ($S$) between the uncertainty levels and recall for cross-modal retrieval \cite{sedgwick2014spearman}. We also compute the regression fit $R^2$ between the uncertainty levels and Recall@1 performances to measure if the drop in performance follows a linear trend. To evaluate the overall performance of cross-modal retrieval of the models trained with different methods, we also compare the average Recall@1 across all uncertainty levels.

\subsection{Baseline methods}
The uncertainty quantification baselines include ProbVLM, PFE and PCME++ adapted/trained in a post-hoc manner from pretrained models (denoted by $\text{PFE}^\star$ and $\text{PCME++}^\star$, respectively). Note that the post-hoc adaptation for both PFE and PCME++ only predicts the variances of probabilistic embeddings, while keeping the means unchanged from the pretrained models. This is due to PFE and PCME++ achieving poor performance (worse than pre-trained models) for cross-modal retrieval, when used to to adapt the means of the embeddings. For downstream tasks, vanilla pre-trained vision-language models are also included. BayesVLM is not compared against with other methods in the main experiments, as it  does not provide a single scalar value quantifying the uncertainty of its embeddings, rather scalar uncertainty for cosine similarities propagated by probabilistic embeddings. 

\subsection{Optimization protocol}
All methods are implemented in \texttt{PyTorch}, and all pre-trained VLMs are loaded by \texttt{transformers}. All computations are conducted on NVIDIA A100/A40 GPUs. We use stochastic gradient descent with momentum (SGD-momen.) for the optimization of AsymVLM, PCME and PFE, and use AdamW for ProbVLM, following its official implementation. The learning rates for different methods are optimized using a grid search within $\{10^{-4}, 5\times10^{-4}, 10^{-3}, 5\times10^{-3}, 10^{-2}, 5\times10^{-2}\}$, and reported as Table \ref{appen_tab:opt}. We apply cosine annealing for learning rate scheduling with a minimal learning rate $10^{-6}$.

\begin{table}[ht]
\small
    \centering
    \caption{Optimizer configurations for different methods}
\begin{tabular}{c c c c}
    \toprule
        Method & Optimizer & Learning rate & Batch size\\
    \midrule
        AsymVLM & SGD & $10^{-2}$ & 2048 \\
        ProbVLM & AdamW & $10^{-4}$ & 2048 \\
        PFE$^{\star}$ & SGD & $10^{-3}$ & 2048 \\
        PCME++$^{\star}$ & SGD & $5\times 10^{-4}$ & 2048 \\
    \bottomrule    
    \end{tabular}
    \label{appen_tab:opt}
\end{table}

\subsection{Computational costs}
The overhead of post-hoc adaptation with AsymVLM is negligible compared to pre-training CLIP or SigLIP. This efficiency stems from two factors: (1) the adaptor is a lightweight three-layer MLP operating in the 512-dimensional latent space (for CLIP); (2) all CLIP embeddings for adaptation data are cached after a single forward pass through the pretrained VLM. Adaptation training then proceeds solely on these cached embeddings, avoiding repeated runs through the large model and dramatically reducing compute cost. On an NVIDIA A100, completing 200 epochs of adaptation on either MS-COCO, Flickr-30K or CC-200k requires under one GPU-hour.

\section{Supplementary Empirical Results}
Complete experimental results for the evaluation of AsymVLM on CLIP embeddings are presented in Table \ref{app_tab:unc_ret_clip} (mean $\pm$ std. dev. over 5 runs).

\begin{table}[htb]
\caption{Complete experimental results for the comparison of different methods for CLIP embeddings.}
\tiny
\centering
\resizebox{\linewidth}{!}{%
\begin{tabular}{ll ccc ccc}
\toprule
& & \multicolumn{3}{c}{\textbf{\textsc{i2t}}} & \multicolumn{3}{c}{\textbf{\textsc{t2i}}} \\
\cmidrule(lr){3-5} \cmidrule(lr){6-8}
\textbf{\textsc{Dataset}} & \textbf{\textsc{Method}} & Recall@1 $\uparrow$ & $R^2\uparrow$  & $S\downarrow$ & Recall@1 $\uparrow$ & $R^2\uparrow$  & $S\downarrow$ \\
\midrule
& $\text{PFE}^\star$ & 0.500 $\pm$  0.000 &0.558 $\pm$  0.253 &-0.722 $\pm$  0.193 &0.304 $\pm$  0.000 &0.946 $\pm$  0.021 &-0.988 $\pm$  0.008 \\
& $\text{PCME++}^\star$ & 0.500 $\pm$  0.000 &0.931 $\pm$  0.006 &\textbf{-0.996 $\pm$  0.004} &0.304 $\pm$  0.000 &0.948 $\pm$  0.010 &-0.990 $\pm$  0.004 \\

& \text{BayesVLm} & 0.506 $\pm$ 0.007 &0.884 $\pm$ 0.020 &-0.976 $\pm$ 0.011 &0.323 $\pm$ 0.006 &0.932 $\pm$ 0.023 &-0.985 $\pm$ 0.014 \\
\textbf{\textsc{MS-COCO}}
& $\text{ProbVLM}$  &0.480 $\pm$  0.004 &\textbf{0.951 $\pm$  0.011} &-0.981 $\pm$  0.006 &0.293 $\pm$  0.001 &0.979 $\pm$  0.002 &\textbf{-1.000 $\pm$  0.000} \\
& $\text{ProLip}$  &0.500 $\pm$  0.000 &0.808 $\pm$  0.025 &-0.908 $\pm$  0.031 &0.304 $\pm$  0.000 &0.876 $\pm$  0.007 &-0.985 $\pm$  0.012 \\

& $\text{AsymVLM}_{\text{vMF}}$   & \textbf{0.561 $\pm$  0.002} &\underline{0.948 $\pm$  0.016} &\underline{-0.988 $\pm$  0.013} &\textbf{0.392 $\pm$  0.001} &\underline{0.984 $\pm$  0.004} &\textbf{-1.000 $\pm$  0.000} \\
& $\text{AsymVLM}_{\text{PS}}$   &\underline{0.558 $\pm$  0.004} &0.937 $\pm$  0.013 &-0.984 $\pm$  0.012 &\underline{0.390 $\pm$  0.002} &\textbf{0.989 $\pm$  0.001} &\textbf{-1.000 $\pm$  0.000} \\
\midrule

& $\text{PFE}^\star$ &  0.680 $\pm$  0.000 &0.675 $\pm$  0.186 &-0.832 $\pm$  0.094 &0.451 $\pm$  0.000 &0.955 $\pm$  0.002 &\textbf{-0.998 $\pm$  0.005} \\
& $\text{PCME++}^\star$ & 0.679 $\pm$  0.001 &0.455 $\pm$  0.386 &-0.178 $\pm$  0.650 &0.451 $\pm$  0.000 &0.900 $\pm$  0.063 &-0.918 $\pm$  0.067\\

&\text{BayesVLm} & 0.637 $\pm$ 0.012 &\textbf{0.916 $\pm$ 0.035} &\textbf{-0.976 $\pm$ 0.013} &0.425 $\pm$ 0.010 &0.934 $\pm$ 0.011 &-0.973 $\pm$ 0.019\\
\textbf{\textsc{Flickr-30k}}
& $\text{ProbVLM}$  &0.646 $\pm$  0.006 &0.826 $\pm$  0.062 &-0.914 $\pm$  0.029 &0.422 $\pm$  0.002 &\textbf{0.964 $\pm$  0.016} &-0.985 $\pm$  0.012 \\
& $\text{ProLip}$  &0.678 $\pm$  0.000 &0.829 $\pm$  0.032 &-0.954 $\pm$  0.041 &0.450 $\pm$  0.000 &0.928 $\pm$  0.010 &-0.978 $\pm$  0.005 \\

& $\text{AsymVLM}_{\text{vMF}}$   & \textbf{0.688 $\pm$  0.005} &\underline{0.860 $\pm$  0.029} &\textbf{-0.976 $\pm$  0.010} &\textbf{0.504 $\pm$  0.001} &\underline{0.960 $\pm$  0.006} &\underline{-0.995 $\pm$  0.006} \\
& $\text{AsymVLM}_{\text{PS}}$   & \textbf{0.688 $\pm$  0.005} &0.854 $\pm$  0.021 &-0.947 $\pm$  0.032 &0.498 $\pm$  0.002 &0.946 $\pm$  0.021 &-0.993 $\pm$  0.006\\
\midrule

& $\text{PFE}^\star$ & 0.352 $\pm$  0.000 &0.857 $\pm$  0.075 &-0.521 $\pm$  0.757 &0.336 $\pm$  0.000 &0.988 $\pm$  0.008 &-0.595 $\pm$  0.798 \\
& $\text{PCME++}^\star$ & 0.352 $\pm$  0.000 &0.198 $\pm$  0.375 &-0.148 $\pm$  0.430 &0.336 $\pm$  0.000 &0.779 $\pm$  0.105 &-0.887 $\pm$  0.060 \\

&\text{BayesVLM} & 0.347 $\pm$ 0.010 &0.905 $\pm$ 0.025 &-0.968 $\pm$ 0.016 &0.304 $\pm$ 0.008 &0.874 $\pm$ 0.026 &-0.937 $\pm$ 0.022\\
\textbf{\textsc{CC-200k}}
& $\text{ProbVLM}$  & 0.316 $\pm$  0.001 &0.772 $\pm$  0.102 &-0.837 $\pm$  0.066 &0.302 $\pm$  0.001 &0.775 $\pm$  0.029 &-0.965 $\pm$  0.021 \\
& $\text{ProLip}$  &0.351 $\pm$  0.000 &0.968 $\pm$  0.007 &-0.990 $\pm$  0.009 &0.335 $\pm$  0.000 &\underline{0.992 $\pm$  0.002} &\textbf{-1.000 $\pm$  0.000} \\

& $\text{AsymVLM}_{\text{vMF}}$   & \textbf{0.395 $\pm$  0.002} &\underline{0.990 $\pm$  0.003} &\textbf{-1.000 $\pm$  0.000} &\textbf{0.383 $\pm$  0.003} &0.991 $\pm$  0.002 &-0.998 $\pm$  0.005 \\
& $\text{AsymVLM}_{\text{PS}}$   & \underline{0.393 $\pm$  0.001} &\textbf{0.992 $\pm$  0.004} &\textbf{-1.000 $\pm$  0.000} &\underline{0.380 $\pm$  0.002} &\textbf{0.993 $\pm$  0.001} &\textbf{-1.000 $\pm$  0.000} \\

\bottomrule
\end{tabular}
}
\label{app_tab:unc_ret_clip}
\end{table}

Complete experimental results for the evaluation of AsymVLM on SigLIP embeddings are presented in Table \ref{app_tab:unc_ret_siglip}.

\begin{table}[htb]
\caption{Full experimental results for the comparison of different methods for SigLIP embeddings.}
\tiny
\centering
\resizebox{\linewidth}{!}{%
\begin{tabular}{ll ccc ccc}
\toprule
& & \multicolumn{3}{c}{\textbf{\textsc{i2t}}} & \multicolumn{3}{c}{\textbf{\textsc{t2i}}} \\
\cmidrule(lr){3-5} \cmidrule(lr){6-8}
\textbf{\textsc{Dataset}} & \textbf{\textsc{Method}} & Recall@1 $\uparrow$ & $R^2\uparrow$  & $S\downarrow$ & Recall@1 $\uparrow$ & $R^2\uparrow$  & $S\downarrow$ \\
\midrule

& $\text{PFE}^\star$ & 0.654 $\pm$  0.000 &\underline{0.940 $\pm$  0.010} &-0.990 $\pm$  0.005 &0.472 $\pm$  0.000 &\textbf{0.996 $\pm$  0.001} &\textbf{-1.000 $\pm$  0.000} \\
& $\text{PCME++}^\star$ & 0.654 $\pm$  0.000 &0.021 $\pm$  0.016 &-0.101 $\pm$  0.020 &0.472 $\pm$  0.000 &0.893 $\pm$  0.016 &-0.907 $\pm$  0.031 \\

& $\text{BayesVLM}^\star$ & 0.664 $\pm$  0.001 &0.541 $\pm$  0.022 &-0.780 $\pm$  0.036 &0.488 $\pm$  0.002 &0.923 $\pm$  0.006 &-0.931 $\pm$  0.027 \\
\textbf{\textsc{MS-COCO}}
& $\text{ProbVLM}$ & 0.649 $\pm$  0.001 &0.564 $\pm$  0.063 &-0.672 $\pm$  0.064 &0.469 $\pm$  0.001 &0.935 $\pm$  0.007 &-0.998 $\pm$  0.005\\
& $\text{ProLIP}$ & 0.678 $\pm$  0.001 &0.791 $\pm$  0.031 &-0.866 $\pm$  0.042 &0.489 $\pm$  0.000 &0.944 $\pm$  0.002 &-0.996 $\pm$  0.005\\

& $\text{AsymVLM}_{\text{vMF}}$ & \textbf{0.694 $\pm$  0.002} &\textbf{0.948 $\pm$  0.018} &\textbf{-0.995 $\pm$  0.006} &\textbf{0.502 $\pm$  0.000} &0.986 $\pm$  0.001 &\textbf{-1.000 $\pm$  0.000} \\
& $\text{AsymVLM}_{\text{PS}}$ & \underline{0.691 $\pm$  0.002} &0.929 $\pm$  0.021 &\underline{-0.993 $\pm$  0.015} &\underline{0.497 $\pm$  0.001} &\underline{0.987 $\pm$  0.002} &\textbf{-1.000 $\pm$  0.000} \\
\midrule

& $\text{PFE}^\star$ & 0.815 $\pm$  0.000 &\underline{0.840 $\pm$  0.023} &-0.947 $\pm$  0.030 &0.638 $\pm$  0.000 &0.922 $\pm$  0.005 &\textbf{-0.998 $\pm$  0.005} \\
& $\text{PCME++}^\star$ & 0.814 $\pm$  0.000 &0.007 $\pm$  0.002 &0.127 $\pm$  0.070 &0.638 $\pm$  0.000 &0.472 $\pm$  0.016 &-0.625 $\pm$  0.027 \\

& $\text{BayesVLM}^\star$ & 0.818 $\pm$  0.001 &0.496 $\pm$  0.042 &-0.776 $\pm$  0.021 &0.642 $\pm$  0.001 &0.515 $\pm$  0.019 &-0.742 $\pm$  0.024 \\
\textbf{\textsc{Flickr-30k}}
& $\text{ProbVLM}$  &0.807 $\pm$  0.002 &0.519 $\pm$  0.097 &-0.714 $\pm$  0.119 &0.631 $\pm$  0.000 &0.525 $\pm$  0.038 &-0.640 $\pm$  0.151 \\
& $\text{ProLIP}$  &0.812 $\pm$  0.003 &0.820 $\pm$  0.064 &-0.879 $\pm$  0.090 &0.636 $\pm$  0.001 &0.573 $\pm$  0.028 &-0.699 $\pm$  0.075 \\

& $\text{AsymVLM}_{\text{vMF}}$   & \textbf{0.823 $\pm$  0.002} &0.803 $\pm$  0.044 &\textbf{-0.967 $\pm$  0.010} &\underline{0.647 $\pm$  0.001} &\underline{0.934 $\pm$  0.010} &\textbf{-0.998 $\pm$  0.005} \\
& $\text{AsymVLM}_{\text{PS}}$ & \underline{0.819 $\pm$  0.006} &\textbf{0.891 $\pm$  0.024} &\underline{-0.959 $\pm$  0.012} &\textbf{0.649 $\pm$  0.002} &\textbf{0.949 $\pm$  0.008} &\textbf{-0.998 $\pm$  0.005}\\
\midrule

& $\text{PFE}^\star$ & 0.503 $\pm$  0.000 &0.981 $\pm$  0.003 &\textbf{-1.000 $\pm$  0.000} &0.484 $\pm$  0.000 &\underline{0.992 $\pm$  0.004} &\textbf{-1.000 $\pm$  0.000} \\
& $\text{PCME++}^\star$ & 0.502 $\pm$  0.000 &0.933 $\pm$  0.009 &-0.956 $\pm$  0.010 &0.484 $\pm$  0.000 &0.889 $\pm$  0.016 &-0.965 $\pm$  0.014 \\

& $\text{BayesVLM}^\star$ & 0.512 $\pm$  0.001 &\textbf{0.993 $\pm$  0.006} &-0.998 $\pm$  0.006 &0.488 $\pm$  0.001 &0.981 $\pm$  0.010 &-0.992 $\pm$  0.008 \\
\textbf{\textsc{CC-200k}}
& $\text{ProbVLM}$  & 0.489 $\pm$  0.002 &0.674 $\pm$  0.081 &-0.817 $\pm$  0.071 &0.477 $\pm$  0.000 &0.744 $\pm$  0.022 &-0.934 $\pm$  0.029 \\
& $\text{ProLIP}$  & 0.507 $\pm$  0.002 &0.737 $\pm$  0.061 &-0.903 $\pm$  0.055 &0.487 $\pm$  0.001 &0.795 $\pm$  0.019 &-0.962 $\pm$  0.010 \\

& $\text{AsymVLM}_{\text{vMF}}$   & \textbf{0.516 $\pm$  0.002} &\underline{0.991 $\pm$  0.004} &\textbf{-1.000 $\pm$  0.000} &\textbf{0.496 $\pm$  0.002} &\textbf{0.993 $\pm$  0.002} &\textbf{-1.000 $\pm$  0.000} \\
& $\text{AsymVLM}_{\text{PS}}$   & \underline{0.514 $\pm$  0.001} &0.984 $\pm$  0.003 &\textbf{-1.000 $\pm$  0.000} &\underline{0.489 $\pm$  0.002} &0.985 $\pm$  0.002 &\textbf{-1.000 $\pm$  0.000} \\

\bottomrule
\end{tabular}
}
\label{app_tab:unc_ret_siglip}
\end{table}

Complete experimental results for the evaluation of AsymVLM for robust out-of-distribution zero-shot classification are presented in Table \ref{appen_tab:ood_zs}.

\begin{table}[htb]
\small
  \centering
  \caption{Full experimental results for Out‐of‐distribution zero‐shot classification accuracy on CIFAR‐10, CIFAR‐100 and STL‐10 for pretrained VLMs fine‐tuned with different methods on MS‐COCO, Flickr-30k and CC‐200k datasets.}

  \begin{tabular}{llcccc}
    \toprule
    & & \multicolumn{4}{c}{\textbf{\textsc{Validated on}}} \\
    \cmidrule(lr){3-6}

    \textbf{\textsc{Fine-tuned on}} & \textbf{\textsc{Method}} & \textbf{\textsc{CIFAR-10}} & \textbf{\textsc{CIFAR-100}} & \textbf{\textsc{STL-10}} & \textbf{\textsc{ImageNet-1k}} \\
    \midrule
    \multirow{3}{*}{\textbf{\textsc{MS-COCO}}} 
      & Determ. FT
      &0.684 $\pm$  0.055 &0.300 $\pm$  0.017 &0.829 $\pm$  0.030 & 0.395$\pm$ 0.002 \\
      & $\text{AsymVLM}_{\text{PS}}$
      &\textbf{0.847 $\pm$  0.007} &\underline{0.470 $\pm$  0.012} &\textbf{0.952 $\pm$  0.005} & \underline{0.502$\pm$ 0.002}\\
      & $\text{AsymVLM}_{\text{vMF}}$
      &\underline{0.837 $\pm$  0.010} &\textbf{0.477 $\pm$  0.010} &\underline{0.940 $\pm$  0.010} & \textbf{0.507 $\pm$  0.003}\\

    \midrule
    \multirow{3}{*}{\textbf{\textsc{Flickr-30k}}} 
      & Determ. FT
      &0.688 $\pm$  0.030 &0.342 $\pm$  0.019 &0.874 $\pm$  0.012 & 0.369 $\pm$ 0.007\\
      & $\text{AsymVLM}_{\text{PS}}$
      &\underline{0.791 $\pm$  0.019} &\underline{0.413 $\pm$  0.009} &\textbf{0.920 $\pm$  0.009} & \underline{0.451 $\pm$  0.003}\\
      & $\text{AsymVLM}_{\text{vMF}}$
      &\textbf{0.792 $\pm$  0.011} &\textbf{0.422 $\pm$  0.014} &\underline{0.918 $\pm$  0.006} & \textbf{0.464 $\pm$  0.002}\\

    \midrule
    \multirow{3}{*}{\textbf{\textsc{CC-200k}}} 
      & Determ. FT
      &0.779 $\pm$  0.037 &0.411 $\pm$  0.011 &0.922 $\pm$  0.017 & 0.450 $\pm$  0.004\\
      & $\text{AsymVLM}_{\text{PS}}$
      &\underline{0.861 $\pm$  0.008} &\textbf{0.542 $\pm$  0.005} &\textbf{0.968 $\pm$  0.001} & \underline{0.520 $\pm$  0.002}\\
      & $\text{AsymVLM}_{\text{vMF}}$
      &\textbf{0.866 $\pm$  0.013} &\textbf{0.542 $\pm$  0.013} &\underline{0.967 $\pm$  0.004} & \textbf{0.527 $\pm$  0.001}\\
      \midrule
      \multirow{1}{*}{\textbf{\textsc{None}}} 
      & CLIP & 0.888 & 0.642 & 0.974 & 0.632\\
    \bottomrule
  \vspace{\shrinklength}
  \end{tabular}
\label{appen_tab:ood_zs}
\end{table}

Complete experimental results for the evaluation of AsymVLM for zero-shot classification, with none-of-the-above handling are presented in Table \ref{appen_tab:robust_zs}.

\begin{table}[htb]
\small
  \centering
  \caption{Complete results for the evaluation metrics for various zero-shot classification methods using different dummy prompts. The table reports the accuracy on positive samples and negative samples when classifying all inputs into CIFAR-10 classes. Baseline methods using threshold- and margin-based rejection are included for comparison. }
  \begin{tabular}{llcc}
    \toprule
    \textbf{\textsc{Dummy Prompt}} & \textbf{\textsc{Method}} & \textbf{\textsc{Positive Acc.}} & \textbf{\textsc{Negative Acc.}} \\
    \midrule
    \multirow{4}{*}{\small \texttt{"a photo"}} 
      & CLIP & \textbf{0.888} & 0.009 \\
      & Determ. FT & 0.769 $\pm$ 0.028 & 0.115$\pm$0.138 \\
      & $\text{AsymVLM}_{\text{PS}}$ 
      & 0.845 $\pm$ 0.011 & \underline{0.547$\pm$0.106} \\
      & $\text{AsymVLM}_{\text{vMF}}$
      & \underline{0.857 $\pm$ 0.010} & \textbf{0.587$\pm$0.110} \\

    \midrule
    \multirow{4}{*}{\small \texttt{"a photo of an object"}} 
      & CLIP & \textbf{0.888} & 0.009 \\
      & Determ. FT & 0.778 $\pm$ 0.040 & 0.037$\pm$0.026 \\
      & $\text{AsymVLM}_{\text{PS}}$ 
      & 0.849 $\pm$ 0.012 & \textbf{0.609$\pm$0.119} \\
      & $\text{AsymVLM}_{\text{vMF}}$
      & \underline{0.858 $\pm$ 0.011} & \underline{0.557$\pm$0.089} \\

    \midrule
    \multirow{4}{*}{\small \texttt{"a photo of something"}} 
      & CLIP & \textbf{0.888} & 0.009 \\
      & Determ. FT & 0.769 $\pm$ 0.031 & 0.125$\pm$0.132 \\
      & $\text{AsymVLM}_{\text{PS}}$ 
      & 0.843 $\pm$ 0.019 & \textbf{0.595$\pm$0.149} \\
      & $\text{AsymVLM}_{\text{vMF}}$
      & \underline{0.857 $\pm$ 0.009} & \underline{0.585$\pm$0.107} \\
    \midrule
    \multirow{2}{*}{\textbf{\textsc{None}}} 
      & Margin-Based & 0.584 & 0.579 \\
      & Threshold-Based & 0.646 & 0.560 \\
    \bottomrule

  \vspace{\shrinklength}
  \end{tabular}
\label{appen_tab:robust_zs}
\end{table}

\end{document}